\def\eqref#1{equation~\ref{#1}}
\def\1{\bm{1}}
\DeclareMathAlphabet{\mathsfit}{\encodingdefault}{\sfdefault}{m}{sl}
\SetMathAlphabet{\mathsfit}{bold}{\encodingdefault}{\sfdefault}{bx}{n}
\newtheorem{theorem}{Theorem}
\newtheorem{remark}{Remark}
\newtheorem{proposition}{Proposition}
\newtheorem{lemma}{Lemma}
\newtheorem{definition}{Definition}
\crefname{equation}{Equation}{Equations}
\definecolor{softgray}{RGB}{120,120,120}  
\definecolor{prismcoral}{HTML}{FF6B47}     
\definecolor{prismgold}{HTML}{FFB347}      
\definecolor{prismmint}{HTML}{4ECDC4}      
\definecolor{prismsky}{HTML}{45B7D1}       
\definecolor{prismpurple}{HTML}{9B59B6}    
\newtheorem*{manuallemma}{Lemma} 
\newtheorem*{manualtheorem}{Theorem} 
\title{\algname: Weighted Policy Optimization for \\ Reasoning in Diffusion Language Models}
\author{
\textbf{Xiaohang Tang}$^{1,2}$\thanks{Equal contribution. Code: \href{https://github.com/xiaohangt/wd1}{https://github.com/xiaohangt/wd1}}\quad
\textbf{Rares Dolga}$^{2,5}$\footnotemark[1]\quad
\textbf{Sangwoong Yoon}$^{3}$\thanks{Corresponding authors (\texttt{ilija.bogunovic@unibas.ch}, \texttt{swyoon@unist.ac.kr}).}\quad
\textbf{Ilija Bogunovic}$^{4,2}$\footnotemark[2]
\\[0.6em]
$^{1}$Department of Statistical Science, University College London, United Kingdom\\
$^{2}$UCL AI Centre, University College London, United Kingdom\\
$^{3}$Graduate School of AI, Ulsan National Institute of Science and Technology, South Korea\\
$^{4}$Department of Mathematics and Computer Science, Universität Basel, Switzerland\\
$^{5}$UIPath
}
\newcommand{\algname}{\textit{wd1}}
\begin{document}

\maketitle

\begin{abstract}

Improving the reasoning capabilities of diffusion-based large language models (dLLMs) through reinforcement learning (RL) remains an open problem. 
The intractability of dLLMs likelihood function necessitates approximating the current, old, and reference policy likelihoods at each policy optimization step. This reliance introduces additional computational overhead, and can lead to large variance and estimation error in RL objective -- particularly in computing the policy ratio for importance sampling. 
To mitigate these issues, we introduce \textit{wd1}, a novel ratio-free policy optimization approach that reformulates the RL objective as a weighted log-likelihood, requiring only a single approximation for the current parametrized policy likelihood. We formally show that our proposed method can be interpreted as energy-guided discrete diffusion training combined with negative sample unlearning, thereby confirming its theoretical soundness. In experiments on LLaDA-8B model, \textit{wd1} outperforms diffusion-based GRPO (\textit{d1}) while requiring lower computational cost, achieving up to a $+59\%$ improvement in accuracy. Furthermore, we extend \textit{wd1} to denoising-stepwise weighted policy optimization (\algname++), achieving state-of-the-art math performance of $44.2\%$ on MATH500 and $84.5\%$ on GSM8K with only 20 RL training steps.


\end{abstract}

\section{Introduction}


Diffusion-based large language models (dLLMs) have recently gained attention as promising alternatives to autoregressive (AR) models for language modelling tasks \citep{nie2025largelanguagediffusionmodels, ou2025absorbingdiscretediffusionsecretly, yang2025mmada}. Unlike AR models, which generate tokens sequentially, dLLMs iteratively refine entire response sequences through a denoising process. A primary advantage of this approach is the significantly improved inference efficiency. Notably, recent closed models such as Mercury \citep{labs2025mercuryultrafastlanguagemodels} and Gemini Diffusion achieve over an order of magnitude speed-up in generation compared to AR models, while maintaining comparable generation quality. Furthermore, open-weight dLLMs demonstrate competitive performance on standard language benchmarks, with smaller models \citep{lou2024discretediffusionmodelingestimating, ou2025absorbingdiscretediffusionsecretly, nie2024scaling} achieving parity with equivalently sized AR baselines, and larger-scale models such as LLaDA-8B \citep{zhu2025llada} and Dream-7B \citep{dream2025} extending this trend at scale. While dLLMs demonstrate strong performance in text generation, it remains an open and important question how best to fine-tune dLLMs using RL -- a paradigm that has proven highly effective in alignment and improving reasoning capabilities of AR models \citep{ouyang2022training, shao2024deepseekmath}. \looseness=-1

A key challenge in applying reinforcement learning (RL) to dLLMs is the intractability of their likelihood functions \citep{zhao2025d1, yang2025mmada}, which necessitates approximation for policy optimization. Applying approximated log-likelihood for diffusion-based GRPO \citep{shao2024deepseekmath,zhao2025d1} can exponentially amplify the approximation error and lead to large variance when computing the policy ratio for importance sampling.  Moreover, GRPO requires the estimated likelihoods of the current, old, and reference policies at every training step, leading to significant computational overhead.  These issues can be further exacerbated as the completion length and the number of diffusion steps increase.

To address these challenges, we propose \textit{\algname}, a policy optimization approach with \textbf{w}eighted log-likelihood objective for \textbf{d}LLMs.
Crucially, this objective dispenses with explicit policy ratios and relies on a single likelihood approximation, thereby avoiding the potentially large bias and variance in policy ratio, and reducing the computational overhead. Our principal contributions are:

\begin{itemize}[left=0pt]
    \item We propose a novel reinforcement learning method for dLLMs, termed \textit{\algname}, which formulates the objective as a weighted log-likelihood of outcome sequence, derived from reverse-KL regularized policy optimization. The weight, defined as $(-w^+ + w^-)$ and dependent on the advantage $A$, balances two terms: $w^+ \propto \exp(A)$ increases the probability of higher-advantage completions, while $w^- \propto \exp(-A)$ decreases the probability of lower-advantage ones. Together, this mechanism amplifies beneficial outcomes meanwhile actively reducing detrimental ones.
    \item We prove that our proposed RL method for dLLMs can be interpreted as jointly training an energy-guided discrete diffusion model—guided by the advantage function—and unlearning low-advantage data, thereby steering generation toward higher-advantage completions.
    \item We conduct experiment with LLaDA-8B-Instruct model \citep{nie2025large}. Compared to the baseline method \textit{d1} \citep{zhao2025d1}, our method \algname{} achieves \textbf{76.4\% on Sudoku \citep{arel_sudoku} (+58.8\% over \textit{d1})} and \textbf{51.2\% on Countdown \citep{tinyzero} (+16\% over \textit{d1})}, without requiring supervised fine-tuning (SFT), and with significantly less computational burden in RL training.
    \item We further extend our method to leverage intermediate completions generated in the decoding process, which we call \algname++. The extended method surpasses several concurrent RL for dLLMs methods, achieving state-of-the-art performance \textbf{44.2\% on MATH500} and \textbf{84.5\% on GSM8K} with only 20 training steps, and $10\times$ fewer rollouts compared to the baseline methods.
    
\end{itemize}


\section{Preliminaries}

We denote the generation policy of diffusion-based Large Language Models (dLLMs) by $\pi_\theta$. Denote prompt $q \in \mathcal{D}$, and completions $o \in \mathcal{O}$. Notably, the reward function denoted by $R(q, o)$ in this work is not limited to verifiers. We use superscript $k$ to indicate the $k$-th token of completion: $o^k$ or $x_0^k$.

\subsection{Diffusion Large Language Models}

The most promising discrete diffusion for language modeling is masked diffusion, which gradually corrupts the text sequence with a mask token \citep{sahoo2024simpleeffectivemaskeddiffusion,ou2025absorbingdiscretediffusionsecretly, shi2025simplifiedgeneralizedmaskeddiffusion, lou2024discretediffusionmodelingestimating}. Let $t\in[0,1]$ denote the diffusion timestep, and $x_t$ as the masked sequence at step $t$. The fully denoised sequence (i.e., the completion $o$) is represented by $x_0$, and the forward process ($p_{t|0}(x_t \mid x_0)$) is formulated as a continuous-time Markov chain. The transition kernel $\mathbf{Q}_t$ is absorbing \citep{austin2023structureddenoisingdiffusionmodels,campbell2022continuous}, meaning that at time $t$, $\mathbf{Q}_t = \sigma(t)\mathbf{Q}^{\text{absorb}}$, where $\sigma$ is a decreasing scalar noise schedule and $\mathbf{Q}^{\text{absorb}}$ is a constant matrix (See Definition \ref{def:absorb}). 

This work aims to apply reinforcement learning to fine-tune masked discrete diffusion models such as LLaDA \citep{ou2025absorbingdiscretediffusionsecretly,zhu2025llada}, which models the clean data distribution conditional on masked sequence as $\pi_\theta(x_0^k \mid x_t)$. The negative Evidence Lower Bound (ELBO) is reduced to a simple objective called Denoising Cross Entropy (DCE) \citep{ou2025absorbingdiscretediffusionsecretly}: $\forall x_0 \sim p_{\text{data}}$, 
\begin{align}
\mathcal{L}_\text{DCE}(x_0) = -\mathbb{E}_{t \sim \mathcal{U}[0,1],\ x_t \sim p_{t|0}(x_t | x_0)} 
\left[
\frac{1}{t} \sum_{k=1}^{K} \mathbf{1}(x_t^k = [\texttt{mask}]) \log \pi_\theta(x_0^k \mid x_t)
\right],
\label{eq:md_elbo}
\end{align}
where $K$ is the length of the sequence and $x_0^k$ is the $k$-th token of $x_0$. Specifically, intermediate steps $t$ are sampled from uniform distribution, and masked sequence is sampled following the predefined forward process $p_{t|0}(x_t \mid x_0)$. DCE can be used to approximate the marginal likelihood $\log \pi_\theta(x_0)$ for supervised fine-tuning and reinforcement learning \citep{nie2025large,yang2025mmada}.

%


\subsection{Existing Policy Optimization Methods}
The base method of current prevailing RL fine-tuning algorithms is Trust Region Policy Optimization (TRPO) \citep{schulman2015trust}, in which \textit{forward} KL divergence is applied to restrict the update:
\begin{align}
\max_\theta \; \mathbb{E}_{q \sim \mathcal{D} ,\ o \sim \pi_{\theta}(\cdot |q)} \Big[  A^{\pi_\text{old}}(q,o)  -  \lambda D_{\textcolor{teal}{\text{KL}}}(\colorbox{red!10}{$\pi_{\text{old}}(\cdot|q)$} \| \colorbox{cyan!10}{$\pi_\theta(\cdot|q)$} )\Big],
\label{eq:trpo}
\end{align}
where $A^{\pi_\text{old}}$ is the advantage function, $q$ and $o$ are denoted as the prompt and (clean) response, respectively. \Cref{theo:trpo} (\Cref{sec:theory}) demonstrates the monotonic policy improvement of TRPO.

PPO then extends the soft constraint (KL penalty) to clipping policy ratio $\pi_\theta(\cdot|q)/\pi_{\text{old}}(\cdot|q)$ and employing pessimism for policy update, further employed in fine-tuning \citep{ouyang2022training} with additional reverse-KL regularization w.r.t.~the reference policy $\pi_{\text{ref}}$. Group Relative Policy Optimization (GRPO) \citep{shao2024deepseekmath} simplifies PPO by sampling a group of completions $\{o_i\}_{i=1}^G$ and approximating their advantage with their normalized rewards. This advantage is corrected by subtracting the mean reward across the group~\citep{liu2025understanding}:  
\(
\hat{A}_{i} = R(q, o_i) - \texttt{mean}\big(R(q, o_{1:G})\big),
\)
which we refer to as the \emph{group-relative advantage}.

\subsection{Policy Optimization for dLLMs}

Adapting GRPO to diffusion-based large language models (dLLMs) presents notable challenges, 
since dLLMs generate outputs via a non-autoregressive, iterative denoising process, making the computation of $\log \pi_\theta(o|q)$ intractable and necessitating approximation for policy optimization.

Existing works by \citet{nie2025large,yang2025mmada} employ ELBO for per-token log-likelihood approximation following DCE: $\phi^\pi(x_0^k) = \mathbb{E}_{t\in \mathcal{U}[0,1]}[w \cdot \mathbf{1}[x_t^k = \mathtt{mask}] \log \pi(x_0^k | x_t, q)]$, where $w=1/t$ in DCE and $w=1$ in UniGRPO \citep{yang2025mmada}. However, an accurate estimation requires a large sample size of $t$, resulting in inefficiency for online RL. A biased but efficient method is introduced in \textit{d1} \citep{zhao2025d1}, requiring only sample at $t=1$: $\phi^\pi(x_0^k)= \log \pi(x_0^k |x_1, q')$, where prompt $q'$ is randomly masked, $x_1$ is fully masked response. 

In diffusion-based GRPO \citep{zhao2025d1,yang2025mmada}, policy ratio is then computed using the approximated log-likelihoods: $r^k_i(\theta) = \pi_\theta(o^k_i) / \pi_\text{old}(o^k_i) \approx \exp \big( \phi^{\pi_\theta}(o_i^k) - \phi^{\pi_{\text{old}}}(o_i^k)\big)$ for importance sampling in estimating the objective of GRPO:
\vspace{.2em}
\begin{align}
\mathbb{E}_{\substack{q \sim \mathcal{D},\\ \, o_{1:G} \sim \pi_{\text{old}}(\cdot \mid q)}}
\bigg[
\frac{1}{GK} \sum_{i=1}^{G} \sum_{k=1}^{K} 
\min\big(r^k_i(\theta) \hat{A}_i, \text{clip}(r^k_i(\theta), 1\pm \epsilon) \hat{A}_i \big) 
- \beta D_{\text{KL}}\big( \pi_\theta(\cdot) \,\|\, \pi_{\text{ref}}(\cdot) \big)
\bigg].
\label{eq:diffuGRPO}
\end{align}

\begin{wrapfigure}{r}{0.4\textwidth} 
  \centering
  \vspace{-2em}
  \includegraphics[width=0.38\textwidth]{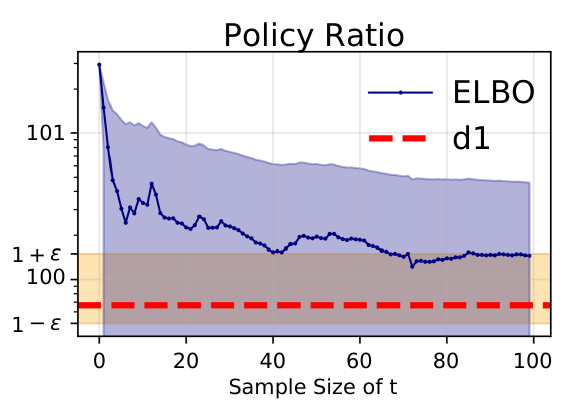} 
  \vspace{-1em}
  \caption{Example policy ratio value $r_i^k$ computed using ELBO and approximated likelihood in \textit{d1} on GSM8K after a policy update. Ratio's unclipped interval is $[1-\epsilon, 1+\epsilon]$, where $\epsilon=0.5$. ELBO-based likelihood approximation yields high-variance ratio estimates; \textit{d1} induces a biased ratio that can deviate substantially from ELBO. Both methods suffer from efficiently and accurately compute ratios.}
    \vspace{-2em}
  \label{fig:ratio}
\end{wrapfigure}

However, existing approaches are hampered by their reliance on extensive likelihood approximation to compute the policy ratio. In current diffusion-based GRPO methods, the ratio is computed as \(
r_i^k \approx \exp \big( \phi^{\pi_\theta}(o_i^k) - \phi^{\pi_{\text{old}}}(o_i^k)\big)
\) so approximation errors in likelihood can be \emph{exponentially} amplified. As formally shown in Appendix~\ref{append:approximation_error}, the resulting error in the estimated RL objective becomes more severe when less accurate log-likelihood approximations are used, such as in \textit{d1}, or ELBO used in DCE and UniGRPO when the Monte Carlo sample size $t$ is small.

Although increasing $t$ in the ELBO estimator can reduce approximation error, the induced ratio estimates can still exhibit high variance, as illustrated in Figure~\ref{fig:ratio}. Although alternative approximator such as that in \textit{d1} can improve efficiency, but yields a biased ratio that can differ substantially from the ELBO-based ratio, thereby introducing a systematic bias into the RL training objective. Finally, GRPO requires applying the approximation function $\phi$ separately to three policies---$\pi_\theta$, $\pi_{\text{old}}$, and $\pi_\text{ref}$---which further increases computational overhead.

\begingroup
\let\clearpage\relax

\section{\algname: {W}eighted Policy Optimization for dLLMs}



In this section, we introduce \algname, a novel RL algorithm that eliminates the need for approximating the likelihood (policy) ratios for importance sampling, aiming to reduce the computational burden, and the variance and approximation error in computing the RL objective. We further extend our method to \algname++ by applying denoising-stepwise policy optimization.
\vspace{-1ex}
\subsection{Reinforcement Learning as Weighted Log-Likelihood Maximization}
\vspace{-1ex}
\label{sec:wll}
Prevailing RL methods are based on constrained policy optimization \citep{belousov2017f}, penalizing the deviation of current policy $\pi_\theta(\cdot|q)$ from the old policy $\pi_{\text{old}}(\cdot|q)$. TRPO objective (\cref{eq:trpo}) applies a forward-KL penalty. We instead adopt reverse-KL penalty augmented with the reference policy regularization $D_{\text{KL}} ( \pi_\theta(\cdot|q) \,\|\, \pi_{{\text{ref}}}(\cdot|q))$:
\begin{align}
\max_{\theta} \; \mathbb{E}_{q \in \mathcal{D}, o \sim \pi_\theta(\cdot|q)} \Big[ A^{\pi_{\text{old}}}(q, o)
- \lambda D_{\textcolor{teal}{\text{KL}}}\Big(\colorbox{cyan!10}{$\pi_\theta(\cdot|q)$} \| \colorbox{red!10}{$\pi_{\text{old}}(\cdot|q)$}\Big) - \beta D_{\text{KL}} \Big( \pi_\theta(\cdot|q) \,\|\, \pi_{{\text{ref}}}(\cdot|q) \Big) \Big].
\label{eq:powithreg}
\end{align}

Note that the monotonic improvement guarantee still holds when using reverse-KL penalty, as we show in Theorem~\ref{theo:rkl_mi}.
From the method of Lagrange multipliers, the solution to \Cref{eq:powithreg} has the following form~\citep{peng2019advantage,rafailov2023direct}:
\begin{align}
\pi^{*}(\cdot|q) \propto \pi_{\text{old}}(\cdot|q)^{\lambda/(\lambda+\beta)} \cdot \pi_{{\text{ref}}}(\cdot|q)^{\beta/(\lambda+\beta)}  \cdot
\exp\bigg( \frac{A^{\pi_\text{old}}(q, \cdot)}{\lambda+\beta} \bigg).
\label{eq:rev_grpo_update}
\end{align}
As the analytic form of the optimal policy $\pi^*$ is known, we can train our policy by directly minimizing $D_{\text{KL}}(\pi^*(\cdot|q) \,\|\, \pi_\theta(\cdot|q))$. This minimization can be expressed as the following weighted log-likelihood (WLL) loss, where the weights $\propto \exp \big(\psi A^{\pi_\text{old}}\big)$, $\psi=\tfrac{1}{\lambda+\eta}$ and the samples are obtained from the geometric mixture policy $\pi_{\text{old}}^{\text{ref}}(\cdot|q) \propto \pi_{\text{old}}(\cdot|q)^{\lambda/(\lambda+\beta)} \cdot \pi_{{\text{ref}}}(\cdot|q)^{\beta/(\lambda+\beta)}$ (See \Cref{thm:nll_objective}): $\forall q \sim \mathcal{D}$, 
\begin{align}
\mathcal{L}_{\text{WLL}}(\theta)&=\mathbb{E}_{o \sim \pi^{\text{ref}}_{\text{old}}(\cdot|q)} 
\Big[ -\exp \big(\psi A^{ \pi_{\text{old}}}(q, o)\big) \cdot \log \pi_\theta(o|q) \Big] \label{eq:revgrpo_psr} \\
&\approx \mathbb{E}_{\{o_i\}_{i=1}^{G} \sim \pi^{\text{ref}}_{\text{old}}(\cdot|q)}\left[ \frac{1}{G}\sum_{i=1}^{G} - \frac{\exp \big(\psi \hat{A}_i\big)}{ \sum_{j=1}^{G} \exp \big(\psi \hat{A}_j\big)} \log \pi_\theta(o_i|q)\right]. 
\label{eq:revgrpo_psr_approx}
\end{align} 
As shown in \Cref{eq:revgrpo_psr_approx}, we approximate the advantage function using the group-relative advantage $\hat{A}$ and normalize the weights, thereby limiting their magnitude and reducing variance in loss computation. Notably, dividing by the normalization constant does not affect the solution, since it is independent of the completions.
The resulting objective does not involve ratio $\pi_\theta(\cdot|q)/\pi_{\text{old}}(\cdot|q)$ for importance sampling or $\pi_\theta(\cdot|q)/\pi_{\text{ref}}(\cdot|q)$ for regularization, successfully avoiding the potential amplification of log-likelihood approximation error and large variance in diffusion GRPO.

Although the objective $\mathcal{L}_{\text{WLL}}(\theta) $ in \Cref{eq:revgrpo_psr_approx} avoids the likelihood ratio estimation, it has two limitations.
First, the algorithm is not fully utilizing all the completions. Due to the exponential form of the weighting, completions with relatively low advantage -- referred to as \emph{negative} samples -- may receive vanishingly small weights, and do not contribute to learning.
Second, due to the likelihood-maximization property of WLL, the likelihoods of all sampled completions are increased, even for negative samples. This issue is exacerbated in scenarios where all completions attain identical but low rewards (e.g. 0), thus all weights become equal and the likelihoods of these suboptimal samples are nonetheless reinforced.

\vspace{-1ex}
\subsection{\algname: Fully Utilizing Completions}
\vspace{-1ex}
We propose \algname, an improved weighted log-likelihood objective that explicitly reinforces positive samples and penalizes negative samples: 
\begin{align}
\tcboxmath[colback=white, colframe=gray]{
\mathcal{L}_{\algname}(\theta) =  \mathbb{E}_{q \sim \mathcal{D}, \{o_i\}_{i=1}^{G} \sim \pi_{\text{old}}^{\text{ref}}(\cdot|q)} 
\Big[ \frac{1}{G} \sum_{i=1}^{G} \big( -w^+(q, o_i) + w^-(q, o_i) \big) \cdot \log \pi_\theta(o_i|q) \Big],}
\label{eq:wd1}
\end{align}
where the weights are based on group-relative (GRPO) advantage and are further normalized to avoid overly imbalanced weight $\hat{A}_i=R(q, o_i) - \texttt{mean}(R(q, o_{1:G}))$:
\begin{align}
w^+(q, o_i) = \frac{\exp \big(\psi \hat{A}_i\big)}{ \sum_{j=1}^{G} \exp \big(\psi \hat{A}_j\big)}, \quad \ w^-(q, o_i) = \frac{\exp \big(-\psi \hat{A}_i\big)}{ \sum_{j=1}^{G} \exp \big(-\psi \hat{A}_j\big)}.
\label{eq:weights}
\end{align}
\algname~objective balances positive and negative samples through a complementary penalty term, 
$w^-(q, o_i)\,\log \pi_\theta(o_i|q)$, which minimizes the likelihood of low-advantage completions. This penalty induces negative gradients, thereby accelerating divergence from undesirable completions. Moreover, in the extreme case where all completions exhibit identical advantages, the optimization naturally halts since $w^+ = w^-$, thereby addressing the concern on increasing likelihood of negative samples proposed in Sec \ref{sec:wll}. We demonstrate the effectiveness of this combination via ablations in \ref{append:ablation}.



Our method \algname, a simple ratio-free policy optimization based on \textbf{w}eighted log-likelihood objective for \textbf{d}LLMs, is formally presented in \Cref{algo:w1d}. We first obtain $G$ completions $\{o\}_{i=1}^G$ sampled from geometric mixture $\pi_{\text{old}}^{\text{ref}}(\cdot|q) \propto \pi_{\text{old}}(\cdot|q)^{\lambda / (\lambda + \beta)} \cdot \pi_{{\text{ref}}}(\cdot|q)^{\beta/ (\lambda + \beta)}$ (line 5). Since the base model LLaDA parametrizes the clean token prediction $\pi^{\text{ref}}_{\text{old}}(x_0^k|x_t, q)$ for denoising, we approximate $\log\pi^{\text{ref}}_{\text{old}}(x_0^k|x_t, q) \approx \lambda \log \pi_{\text{old}}(x_0^k|x_t, q) + \beta \log \pi_{\text{ref}}(x_0^k|x_t, q)$ as the logits of the denoising distribution at each step $t$. We then use the samples to compute weights in \Cref{eq:weights} (line 6). In weights computing, we leverage completions from all groups to estimate the normalization constant, in order to restrict the the gradient norm and stabilize the training. Finally in line 8, we approximate the log-likelihood of completions, and compute objectives for policy update. Likelihood approximation in \textit{d1} \citep{zhao2025d1} is directly applicable to \algname: $\log \pi_\theta(x_0 |q) \approx \sum_k \log \pi_\theta(x_0^k |x_1, q')$, where $q'$ is randomly masked from prompt $q$ at every gradient step.

\vspace{-1ex}
\subsection{\algname++: Stepwise Weighted Policy Optimization}
\vspace{-1ex}
\label{sec:wd1pp}

The decoding process in dLLMs relies on confidence-based remasking \citep{wang2025remasking}. At each denoising step $l\in \{1,\cdots,L\}$ in decoding, clean data is predicted conditional on the masked sequence $x_l$ and then tokens with low-confidence are re-masked for further denoising, which construct a refinement process. Since current diffusion RL methods only use the final predicted clean completion for training, there are bunch of clean completions in the intermediate denoising steps remaining \textit{unused}.\looseness=-1

To leverage intermediate clean completions, we extend our weighted log-likelihood objective to a step-wise formulation based on DCE, which we term \algname++. 
In \algname{} (as well as in GRPO), a group of completions $\{o_i\}_{i=1}^G$ is sampled for policy optimization. 
In \algname++, we expand this group to $\{O_i\}_{i=1}^G$, where
\(
O_i = \{x_{0|l} \mid x_{0|l} \sim \pi_{\text{old}}^{\text{ref}}(\cdot \mid x_t, q), \; x_{0|L} = o_i, \; l = 1, \dots, L\},
\)
which contains all generated completions during the decoding process, including intermediate ones. 
The expanded group of completions is then used to estimate both the advantage function and the corresponding weights. 
The resulting loss objective is defined as:
\begin{align}
\mathcal{L}_{\algname++}(\theta)=\mathbb{E}_{\substack{q \sim \mathcal{D}, \\ \{O_i\}_{i=1}^G \sim \pi_{\text{old}}^{\text{ref}}(\cdot|q) \\ l \in \text{Unif}\{1,\cdots,L\}} } 
\big[\tfrac{L}{Gl} \sum_{i=1}^G \sum_{x_{0|l} \in O_i} \big( -w^+(q, x_{0|l}) + w^-(q, x_{0|l}) \big) \cdot \log \pi_\theta(x_{0|l}|x_l, q) \big].
\label{eq:wd1pp}
\end{align}
\vspace{-3ex}




\begin{algorithm}[t]
\caption{\algname: \textbf{W}eighted Policy Optimization for \textbf{d}LLMs}
\textbf{Require:} Reference model $\pi_{\text{ref}}$, prompt distribution $\mathcal{D}$, group size $G$, reward function $R$, dLLM $\pi_\theta$, regularization hyperparameters $\lambda$ and $\beta$
\begin{algorithmic}[1]
\STATE Initialize $\pi_\theta \leftarrow \pi_{\text{ref}}$
\WHILE{not converged}
    \STATE $\pi_{{\text{old}}} \leftarrow \pi_\theta$
    \STATE Sample prompt $q \sim \mathcal{D}$ and $G$ completions $o_i \sim \pi_{{\text{old}}}(\cdot \mid q),\forall i \in [G]$
    \STATE Compute advantage $\hat{A}_{i} = R(q, o_i) - \texttt{mean}(R(q, o_{1:G}))$, $\forall i \in [G]$
    \STATE Compute weights $w^+$ and $w^-$ in \Cref{eq:weights}, $\forall i \in [G]$
    \FOR{gradient update iterations $n = 1, 2, \dots, \mu$ }
        \STATE Compute approximated log-likelihood $\log \pi_\theta(o_i |q)$
        \STATE Compute objective $\mathcal{L}_{\algname}(\theta)$ in \Cref{eq:wd1} or \Cref{eq:wd1pp} and update $\theta$
    \ENDFOR
\ENDWHILE
\RETURN $\pi_\theta$
\end{algorithmic}
\label{algo:w1d}
\end{algorithm}

\vspace{-1ex}
\section{Theoretical Insights: Energy-Guided Diffusion Sampling}
\vspace{-1ex}
In this section, we present a novel theoretical interpretation of policy optimization for \textit{dLLMs}. We prove that the advantage-weighted log-likelihood objective (\algname) for dLLMs can be viewed as energy-guided discrete diffusion training combined with negative sample unlearning.

Sampling from the solution policy of the reverse-KL policy optimization, as described in \Cref{eq:rev_grpo_update}, can be interpreted as energy-guided sampling, where the energy function $\mathcal{E}(q, \cdot)=-{A^{\pi_\text{old}}(q, \cdot)}$.
\Cref{eq:rev_grpo_update} defines the marginal distribution of the clean data ($x_0=o$) which we denote as $p^*_0(x_0)$\footnote{To adapt to the setting of diffusion, we use $x_t$ to denote the (masked) completions, and omit the prompt $q$.}. To obtain the guidance at intermediate time steps $t>0$, we define the forward diffusion process for the target diffusion policy $\pi^*$ as following.
\begin{definition} The forward diffusion process of the target policy ($\pi^*$) satisfies $p^*_{t|0}(x_t|x_0)=p_{t|0}(x_t|x_0)$, where $p_{t|0}$ is the forward process of old diffusion policy $\pi_\text{old}$.
\label{def:idendp}
\end{definition}
Since the reference diffusion policy is the initial policy, three policies have \textit{identical forward diffusion process}, being $p^*_{t|0}(x_t|x_0)=p_{t|0}(x_t|x_0)=p^{\text{ref}}_{t|0}(x_t|x_0)$, and thus, $p^*_{t|0}(x_t|x_0)=p'_{t|0}(x_t|x_0)$, where $p'$ is the geometric mixture diffusion $p'_{t|0}(x_t|x_0)\propto p_{t|0}(x_t|x_0)^{\lambda} p^{\text{ref}}_{t|0}(x_t|x_0) ^{\beta}$. We can then obtain the energy guidance at all time step $t$.
\begin{lemma}[Intermediate Energy Guidance on Discrete Diffusion]
The marginal probability distribution of the masked responses ($x_t$) in the diffusion process satisfies $p^*_t(x_t) =p'_t(x_t)  \cdot  \exp \big(A_t(x_t)\big) / Z_t$, which induces an energy-guided discrete diffusion: 

\begin{align}
p^*_{0|t}(x_0|x_t) \propto p'_{0|t}(x_0|x_t) \cdot \exp(A(x_0) - A_t(x_t)),
\label{eq:inter_energy}
\end{align}
where $-A_t(x_t) = -\log \mathbb{E}_{x_0 \sim p'_{0|t}(\cdot|x_t)}[\exp \big(A( x_0)\big)]$ is intermediate energy function for $t>0$, and $A(\cdot)$ is advantage function (Proof in \Cref{lemma:ieg_proof}). \looseness=-1
\label{lemma:ieg}
\end{lemma}
The guidance provided in \Cref{lemma:ieg} demonstrates that it directs the sampling process toward generating completions that exhibit higher advantage values. However, conducting training-free guided sampling following \Cref{eq:inter_energy} requires estimating the posterior mean of the exponential of advantage \citep{lu2023contrastive}. 
Rather than relying on such estimation, we instead aim to find the training objective to directly approximate the target guided diffusion model. \looseness=-1

Since existing masked dLLMs parametrize the concrete score \citep{meng2022concrete}, to apply the energy guidance, we aim to directly approximate target guided concrete score. Denote $x_t = (x_t^1, \cdots , x_t^d)$ and $\hat{x}_t$ is identical to $x_t$ except the $i$-th token is unmasked (i.e. $x_t^i = [M]$ and $\hat{x}_t^i \neq [M]$). Concrete score is defined as the marginal probability ratio between $\hat{x}_t$ and $x_t$:
\begin{align}
s(x_t, t) \overset{\text{def}}{=} \frac{p(x_t^1, \cdots, \hat{x}_t^i, \cdots, x_t^d)}{p(x_t^1, \cdots, x_t^i, \cdots, x_t^d)}.
\label{eq:concrete_score_def}
\end{align}
We prove that the training objective to approximate the guided concrete score can be simplified as a weighted Denoising Concrete Score Matching (D-CSM) \citep{meng2022concrete}:
\begin{theorem}
The model $s_\theta$ approximates the concrete score of the energy-guided discrete diffusion $p^*$ when the following loss objective is minimized. This objective is in a form of \textbf{advantage-weighted} Denoising Concrete Score Matching, which we call AW-D-CSM:
\begin{align}
\mathcal{L}_{\text{AW-D-CSM}} 
=&\mathbb{E}_{x_0 \sim p'_0(\cdot)}\bigg[\underbrace{\exp \big(A( x_0)\big)}_{\text{Advantage Weight}} \cdot \underbrace{\mathbb{E}_{t \sim [0, T], p'_{t|0}(x_t|x_0)}[ \| {s_\theta(x_t,t)} - \frac{p'_0(\hat{x}_t|x_0)}{p'_0(x_t|x_0)} \|_2^2 ]}_{\mathcal{L}_{\text{D-CSM}}(x_0)}\bigg].
\label{eq:wdcsm}
\end{align}
\vspace{-2.5ex}
\label{proposition:wd-csm}
\end{theorem}
We provide the proof in Appendix \ref{append:awdcsm_proof}. Additionally, D-CSM is an approximation of CSM \citep{meng2022concrete}, which is equivalent to Denoising score entropy (DSE) \citep{lou2024discretediffusionmodelingestimating}. For all $x_0$, it is satisfied up to multiplying a constant that $\mathcal{L}_{\text{D-CSM}}(x_0)\Leftrightarrow \mathcal{L}_{\text{CSM}}(x_0)\Leftrightarrow \mathcal{L}_{\text{DSE}}(x_0)\Leftrightarrow \mathcal{L}_{\text{DCE}}(x_0)$ \citep{ou2025absorbingdiscretediffusionsecretly}. Therefore, AW-D-CSM can then be applied for both SEDD \citep{lou2024discretediffusionmodelingestimating} and RADD \citep{ou2025absorbingdiscretediffusionsecretly} model such as LLaDA.
Denote $p_\theta$ as the concrete score reparametrized model, AW-D-CSM can be converted to a weighted denoising cross-entropy loss (AW-DCE):
\begin{align}
\mathcal{L}_{\text{AW-DCE}} = &\mathbb{E}_{x_0 \sim p'_0(\cdot)}\bigg[\exp \big(A( x_0)\big) \cdot \mathbb{E}_{t \sim [0,T],p'_{t|0}(x_t|x_0)}\big[\sum_{x^i_t=[\texttt{mask}]} - \frac{1}{t}\log p_\theta (x_0^i|x^{\text{UM}}_t) \big]\bigg].
\end{align}
\vspace{-2ex}

DSE and DCE objectives both can be used for likelihood approximation in fine-tuning \citep{ou2025absorbingdiscretediffusionsecretly,nie2025large,yang2025mmada} since they can serve as negative ELBO \citep{lou2024discretediffusionmodelingestimating,shi2025simplifiedgeneralizedmaskeddiffusion}. Thus, the advantage-weighted objective AW-DCE (or AW-DSE) used to learn energy-guided score is in a weighted log-likelihood form as in \algname{} with only $w^+$ (i.e. WLL loss in \Cref{eq:revgrpo_psr}), which contributes to our main theoretical findings: 
\begin{remark}
In the context of applying RL to masked discrete diffusion, the advantage-weighted log-likelihood (WLL) objective (\Cref{eq:revgrpo_psr}) induced by reverse-KL policy optimization, is equivalent to the objective of training energy-guided diffusion models, where the energy function is the negative advantage. Formally, $\mathcal{L}_{\text{WLL}}\Leftrightarrow\mathcal{L}_{\text{AW-DCE}}$ when DCE is used for likelihood approximation.
\end{remark}

\begin{remark}
\label{remark:unlearning}
Additionally, based on DCE likelihood, the additional penalty term on negative samples used to extend WLL to \algname~loss can be viewed as applying \textit{data unlearning} by minimizing the ELBO \citep{alberti2025data}, where the data $\{x^-_0\}$ (negative samples) has probability distribution $p_{\text{data}}(x^-_0)\propto p'_0(x^-_0) \exp(-A(q,x^-_0))$, which corresponds to a Boltzmann distribution that places higher probability mass on regions with lower advantage values (more details in \Cref{append:unlearning}).
\end{remark}


\endgroup

\section{Experiments}
\label{sec:exp}
In this section, we empirically validate the following key advantages of our approach:
\begin{enumerate}[label=\roman*), topsep=0pt, left=0pt, noitemsep]
    \item  Improved reasoning capabilities than existing methods on popular reasoning benchmarks;
    \item reduced computational burden, as reflected by decreased runtime, lower FLOPs and numbers of function evaluations (NFEs) per training step, number of training steps and rollouts; and
    \item marked performance gains attributable to the incorporation of samples with low-advantage.
\end{enumerate}
To evaluate our approach, we next detail the experimental setup and implementation.

\textbf{Experimental Setup.} We perform reinforcement learning (RL) fine-tuning on the LLaDA-8B-Instruct model~\citep{nie2025large} with Low-Rank Adaptation (LoRA) on: GSM8k \citep{cobbe2021training}, MATH \citep{lightman2023let}, Sudoku \citep{arel_sudoku}, and Countdown \citep{tinyzero}. As for decoding, we follow the default strategy \cite{mounier2025review,arriola2025block,wang2025remasking}. Our main baseline is \textit{d1}~\citep{zhao2025d1}, the \textit{first} RL method developed for masked diffusion LLMs (dLLMs). We reproduce the baseline methods \textit{Diffu}-GRPO, which applies diffusion-based GRPO training directly to the LLaDA base model, and \textit{d1}, which performs SFT before applying \textit{Diffu}-GRPO. We use s1K ~\citep{muennighoff2025s1simpletesttimescaling} data for SFT in \textit{d1}. We also compare with SDPO \citep{han2025discrete}, TCR \citep{wang2025time}, and MDPO \citep{he2025mdpo} on benchmarks GSM8K and MATH500. MDPO is reproduced  based on the official implementation and the training dataset \citep{he2025mdpo}.\looseness=-1

\textbf{Implementation.}  As for \algname, we conduct training on the same dataset as in \textit{d1} \citep{zhao2025d1}: training splits on GSM8k and MATH, and the dataset splits provided by~\citet{zhao2025d1} on Sudoku and Countdown. In our implementation of \algname, we apply the same likelihood approximation method as \textit{d1}.  The hyperparameters used in our method and our reproduction of \textit{d1} are listed in  \Cref{tab:wd1-d1-hyper} and \Cref{tab:sft_hyperparam}. As for \algname++, we train on dataset provided by \citep{he2025mdpo}, which is sampled from OpenR1 dataset \citep{face2025open}. Since previous works \citep{yu2025dapo} have demonstrated that the reference policy is empirically unnecessary, we set $\beta = 0$ and $\lambda = 1$ to eliminate $\pi_\text{ref}$ in practice. We report results using \textit{zero-shot} and pass@1 evaluation on sequence lengths of 256 and 512 tokens. 

\begin{table}[!t]
\centering
\caption{Test Accuracy (\%) of \textbf{\algname} and \textit{d1}. We reproduce \textit{d1} and vary completion length. Our approach without SFT, demonstrates particularly higher accuracy on Sudoku\protect\footnotemark and Countdown.}
\begin{tabular}{lcccccccc}
\toprule
\multicolumn{1}{c}{\multirow{2}{*}[-1em]{\textbf{Model} / \textbf{Gen Len}}} & \multicolumn{2}{c}{\textbf{Sudoku}} & \multicolumn{2}{c}{\textbf{Countdown}} & \multicolumn{2}{c}{\textbf{GSM8K}} & \multicolumn{2}{c}{\textbf{MATH500}}   \\   \cmidrule(lr){2-3} \cmidrule(lr){4-5} \cmidrule(lr){6-7} \cmidrule(lr){8-9}
 & 256 & 512 & 256 & 512 & 256 & 512 & 256 & 512 \\
\midrule
LLaDA-8B-Instruct              & 6.7 & 5.5 & 19.5 & 16.0 & 76.7 & 78.2 & 32.4 & 36.2   \\
+ \textit{diffu}-GRPO  & 16.1 & 11.7 & 27.0 & 34.0 & 80.7 & 79.1 & \textbf{34.4} &  \textbf{39.0}  \\
+ SFT + \textit{diffu}-GRPO (\textit{d1})  & 17.6 & 16.2 & 25.8 & 35.2 & 78.2 & 82.0 & \textbf{34.4} & 38.0   \\
\midrule
+ \textbf{\algname}         &  \cellcolor[gray]{.9} \textbf{76.4} &  \cellcolor[gray]{.9} \textbf{62.8} &  \cellcolor[gray]{.9} \textbf{51.2} &  \cellcolor[gray]{.9} \textbf{46.1}  &  \cellcolor[gray]{.9} \textbf{80.8} &  \cellcolor[gray]{.9} \textbf{82.3} &  \cellcolor[gray]{.9} \textbf{34.4} & \cellcolor[gray]{.9} \textbf{39.0} \\
\bottomrule
\end{tabular}
\label{tab:llada_diffu}
\end{table}
\footnotetext{In the technical report version of this work, our method achieved scores of 25.2 and 24.2 on Sudoku after 5K training steps. In this paper, we extend the training to 12.5K steps, and \algname{} results in improved performance.}

\begin{table}[!t]
\centering
\caption{Comparison of Training Cost on 4$\times$A100. We show SFT cost, average training time, FLOPs evaluated by DeepSpeed Flops Profiler, and theoretical NFEs per training step which includes $\mu=8$ gradient steps. \algname{} removes SFT and has less cost per-step in RL than \textit{d1}.}
\begin{tabular}{l cccc}
\toprule
\multicolumn{1}{c}{\multirow{2}{*}{\textbf{Method}}} & \multicolumn{1}{c}{\textbf{SFT}} & \multicolumn{3}{c}{\textbf{RL Training}} \\
\cmidrule(lr){2-2} \cmidrule(lr){3-5}
 & \textbf{Time Cost} & \textbf{Time Cost} & \textbf{FLOPs} & \textbf{NFEs for Likelihood} \\
\midrule
\textit{d1} & 2.01 hrs & 103.5 sec/step & $9.922 \times 10^{15}$/step & $(\mu+2)$/step \\
\textbf{\algname~} & 0 hrs & 81.16 sec/step & $8.887 \times 10^{15}$/step & $\mu$/step \\
\bottomrule
\end{tabular}
\label{tab:train_cost}
\end{table}
\vspace{-1ex}
\subsection{Main Results}
\vspace{-1ex}
\textbf{Superior Reasoning Ability.} In~\Cref{tab:llada_diffu}, we observe that \algname, even without supervised fine-tuning or using any supervised data, consistently outperforms our reproduced implementation of \textit{d1}. Notably, \algname~surpasses \textit{d1} by 43\% in test accuracy on the Sudoku task, and achieves up to a 25\% improvement on Countdown with maximum length 256. Relative to the base LLaDA model, the performance gain reaches as high as 54\% on Sudoku and 42\% on Countdown. On math problem-solving benchmarks GSM8K and MATH500, \algname~attains slightly higher accuracy. Nevertheless, the extended method \algname++ obtains significantly better accuracy. In \Cref{tab:llada_wd1pp} (left), we further compare with concurrent baselines released in recent months. \algname++ outperforms the baselines including strong one MDPO.



\textbf{Reduced Training Cost.}~\Cref{tab:train_cost} demonstrates that the training cost required by \algname~is substantially lower than that of \textit{d1}. Unlike \textit{d1}, \algname~does not require a SFT stage, which alone accounts for approximately two hours of training in \textit{d1}. \algname~achieves additional speedup during the RL phase, where runtime is measured by averaging over $\mu=8$ inner gradient steps per global step. Notably, the time efficiency gap is expected to widen further under settings with larger maximum sequence lengths and more diffusion steps. This efficiency gain is further supported by a reduced FLOPs and number of function evaluations (NFEs) per step, as \algname~bypasses the need to approximate the likelihood of the old policy. We exclude NFEs associated with sampling, since both methods share identical sampling costs as \algname~removes the reference policy regularization. 

In \Cref{tab:llada_wd1pp} (right), we report the training cost required to obtain the best post-trained models on GSM8K and MATH500, measured in terms of the number of training steps and rollouts. \algname++ requires $10\times$ fewer rollouts to achieve superior performance, clearly demonstrating the efficiency of our method. This rapid convergence arises primarily from the \textit{exponential} advantage weights applied to the log-likelihood in \algname{}. 
In contrast, standard RL methods such as GRPO and PPO weight the log-likelihood (or policy ratio) terms directly by the advantage function.


\begin{table}[!t]
\centering
\caption{\textbf{Left:} Extended method \textit{wd1++} compared to concurrent RL methods to fine-tune LLaDA-8B-Instruct. Methods denoted by “(full)” perform full fine-tuning.  \textbf{Right:} Training cost to obtain the best model on GSM8K and MATH500. We count the total number of steps of policy iteration (model weights update), and the number of rollouts used for training (see Table~\ref{table:compute_training_cost} for details on counting).}
\vspace{-2.5ex}
\label{tab:llada_wd1pp}
\begin{minipage}{0.55\textwidth} 
\resizebox{\linewidth}{!}{
\begin{tabular}{lcc}
\toprule
\textbf{Model} &  \textbf{GSM8K} & \textbf{MATH500}   \\
\midrule
LLaDA-8B-Instruct              &  78.2 & 36.2   \\
+ \textit{diffu}-GRPO \citep{zhao2025d1}  &  80.7 &  39.0  \\
+ \textit{d1} \citep{zhao2025d1}  &  82.0 & 38.0   \\
+ SDPO \citep{han2025discrete} (full)  &  81.2 & - \\
+ TCR \citep{wang2025time} &  83.0 &  41.4  \\
+ MDPO \citep{he2025mdpo}  (full) & 83.4 & 43.4 \\
\midrule
+ \textbf{\algname}         &   82.3 &    39.0 \\
+ \textbf{\algname} (full)         &   82.7 &    43.6 \\
+ \textbf{\algname++} (full)       &   \cellcolor[gray]{.9} \textbf{84.5} & \cellcolor[gray]{.9} \textbf{44.2} \\
\bottomrule
\end{tabular}}
\end{minipage}
\begin{minipage}{0.43\textwidth} 
\includegraphics[width=\linewidth]{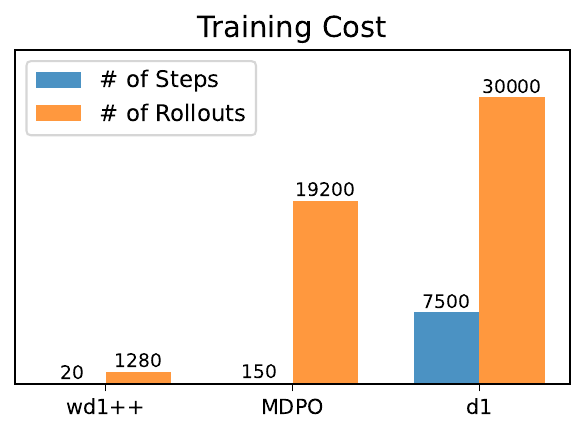}
\end{minipage}
\end{table}


\subsection{Ablation Study}
\label{sec:ablation}

\begin{table}[!t]
\centering
\caption{Ablation on SFT and Negative Samples Weight ($w^-$). We conduct \algname{} training after SFT (\textit{wd1}-SFT) and with only $w^+$ (namely \textit{wd1}-P or WLL defined in \Cref{eq:revgrpo_psr})\protect \footnotemark. Results show that \algname{} performs better without SFT on planning and math tasks. Removing negative sample reinforcement ($w^-$) significantly hurts performance, highlighting its importance.}
\begin{tabular}{lcccccccc}
\toprule
\multicolumn{1}{c}{\multirow{2}{*}{\multirow{2}{*}[-.5em]{\textbf{Model} / \textbf{Gen Len}}}} & \multicolumn{2}{c}{\textbf{Sudoku}} & \multicolumn{2}{c}{\textbf{Countdown}} & \multicolumn{2}{c}{\textbf{GSM8K}} & \multicolumn{2}{c}{\textbf{MATH500}}   \\
 \cmidrule(lr){2-3} \cmidrule(lr){4-5} \cmidrule(lr){6-7} \cmidrule(lr){8-9}
 & 256 & 512 & 256 & 512 & 256 & 512 & 256 & 512 \\
\midrule
\textit{wd1}-P (WLL) & 6.69 & 6.84 & 13.67 & 4.69 & 65.66 & 78.17 & 29.40 & 22.80 \\
\textit{wd1}-SFT & \cellcolor[gray]{.9} \textbf{26.5} & 24.2 & 43.4 & 43.4 & 80.7 & 82.0 &  \cellcolor[gray]{.9} \textbf{36.4} & \cellcolor[gray]{.9} \textbf{39.0}   \\
\textbf{\algname}         &  25.2 & \cellcolor[gray]{.9} \textbf{24.2} & \cellcolor[gray]{.9} \textbf{51.2} & \cellcolor[gray]{.9} \textbf{46.1}  & \cellcolor[gray]{.9} \textbf{80.8} & \cellcolor[gray]{.9} \textbf{82.3} & 34.4 &  \cellcolor[gray]{.9} \textbf{39.0} \\
\bottomrule
\end{tabular}

\label{tab:llada_diffu_ablation}
\end{table}
\footnotetext{To facilitate efficient ablation studies, we restrict our comparisons to checkpoints saved prior to 5000 steps.}

We present an ablation study in Figure~\ref{tab:llada_diffu_ablation}. Notably, we observe that supervised fine-tuning (SFT) yields only marginal improvements within our approach, with a slight gain in the Sudoku task. This contrasts with \textit{d1}, where SFT plays a significant role in improving performance. These findings indicate that \algname~can eliminate the need for an SFT phase, thereby simplifying the training pipeline and substantially reducing computational cost. Additionally, we evaluate the impact of removing the negative-weighted term by setting $w^- = 0$, thus relying solely on the positive advantage weights $w^+$. We provide further ablation on the combined method between $w^+$ and $w^-$ in \Cref{tab:ablation_weight}. The results highlight the importance of explicitly penalizing the likelihood of low-advantage completions, thereby reinforcing the role of negative samples, and emphasize the critical balance between the two weights. \looseness=-1

\begin{wrapfigure}{r}{0.48\textwidth}
    \centering
    \includegraphics[width=0.48\textwidth]{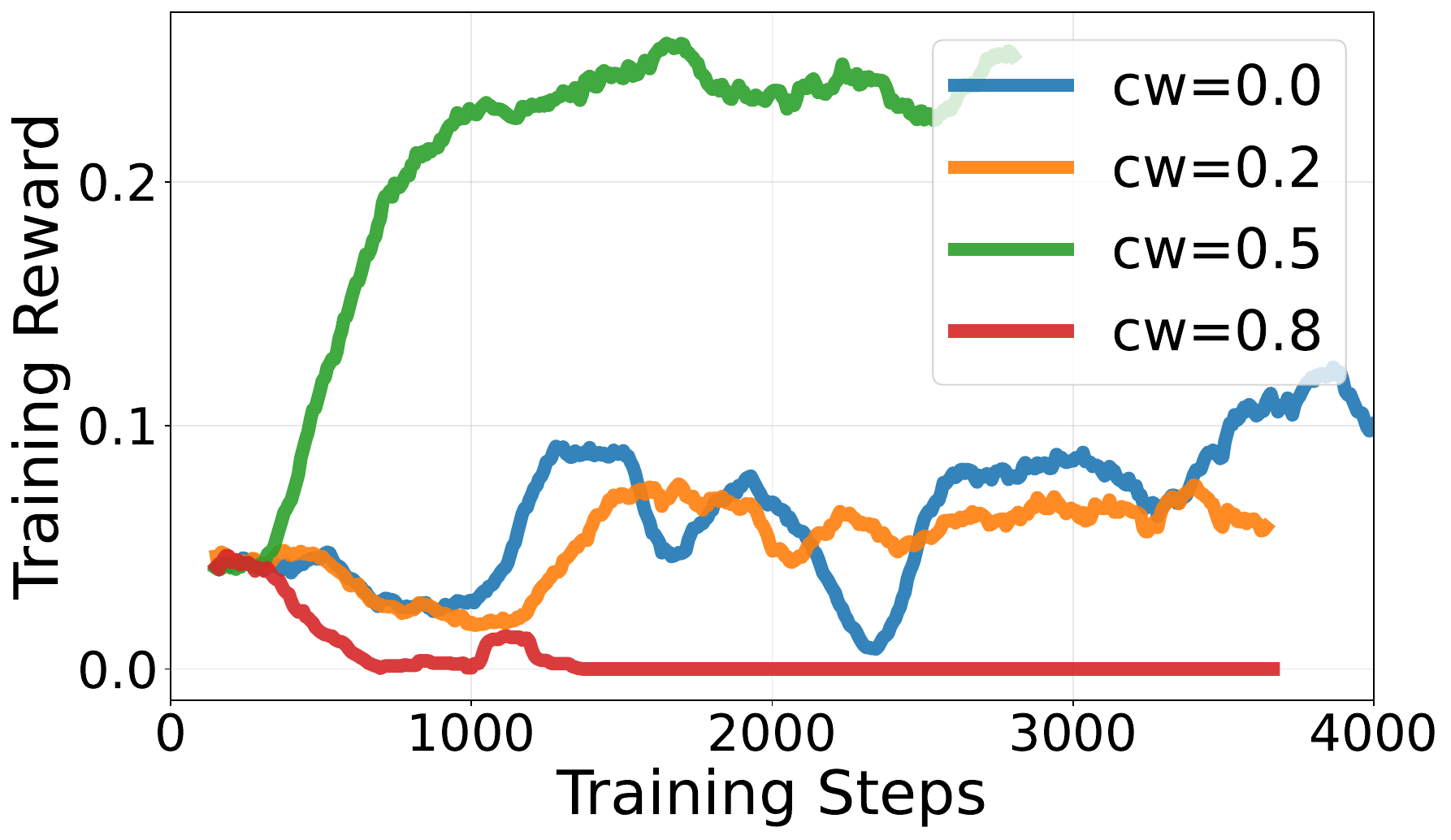}
    \vspace{-6pt}
    \caption{Training rewards of \algname{} under different combined weights on Sudoku.}
    \label{fig:cw_rewards}
    \vspace{-10pt}
\end{wrapfigure}
We further assess sensitivity to the relative weighting of positive and negative samples. The combined weight (cw) corresponds to $\lambda$ in the mixture $-\lambda w^+ + (1-\lambda)w^-$, which scales the log-likelihood term in \algname{}. Training on negative samples alone (cw$=0.0$) yields a pronounced deterioration in performance relative to our default setting (cw$=0.5$). The results reinforce our argument that a balanced contribution of positive and negative weights is most effective. In the absence of positive samples, the reinforcement-learning signal collapses and optimisation becomes largely ineffective. A large emphasis on positive samples (cw$=0.8$) causes performance to deteriorate more rapidly, highlighting the critical role of negative samples in weighted log-likelihood methods.

\section{Related Work}
\textbf{RL for Diffusion-based LLM.} RL for discrete diffusion models has been explored through several approaches. One line of work, exemplified by DRAKES \citep{wang2024fine}, leverages reward backpropagation along the denoising trajectory. This approach requires computing a critic and propagating gradients through each denoising step, which is computationally intensive and prone to vanishing gradients. Alternatively, methods such as MMaDA \citep{yang2025mmada} and \textit{d1} \citep{zhao2025d1} adopt direct RL formulations like GRPO, approximating missing diffusion components—such as per-token likelihoods—for policy optimization. \citet{zhu2025llada} applies Direct Preference Optimization (DPO) to fine-tune the LLaDA base model \citep{nie2025large}, achieving notable gains in reasoning tasks. However, these approaches all depend on likelihood ratios, which can introduce bias and instability due to likelihood approximation errors. In contrast, our method derives a weighted policy optimization approach that eliminates the need for explicit policy ratios. Importantly, similar to prior works, our method directly optimizes the predictive distribution over clean data. A complementary line of research formulates policy optimization in terms of concrete scores \citep{lou2024discretediffusionmodelingestimating, meng2022concrete}. SEPO \citep{zekri2025fine}, for instance, introduces a policy optimization objective that only depends on concrete score estimation, thereby circumventing likelihood approximation altogether.\looseness=-1

\textbf{RL for AR Models.} The connection between GRPO and weighted regression has recently been explored in the context of RL with verifier reward \citep{mroueh2025reinforcement}, where binary rewards simplify policy optimization into likelihood-based objectives. Other closely related approaches are Rejection Sampling Fine-Tuning (RAFT), which maximizes the likelihood of positive-reward samples \citep{xiong2025minimalistapproachllmreasoning}. Extensions of this idea incorporate negative samples to actively penalize the likelihood of negative-reward completions while enhancing that of high-reward ones \citep{zhu2025surprising, chen2025bridging}. Other works introduce negative penalization through contrastive methods, such as Noise Contrastive Estimation (NCE) \citep{JMLR:v13:gutmann12a, oord2019representationlearningcontrastivepredictive, chen2024noisecontrastivealignmentlanguage}. Beyond binary rewards, preference-based learning has been widely studied using the Bradley–Terry model \citep{bradley1952rank, ouyang2022training, rafailov2024directpreferenceoptimizationlanguage, azar2023generaltheoreticalparadigmunderstand, ethayarajh2024ktomodelalignmentprospect, wang2023reverseklgeneralizingdirect, hong2024orpomonolithicpreferenceoptimization}. In contrast to these approaches, our method accommodates general reward signals and can be interpreted as a form of soft rejection sampling, enabling efficient and stable policy optimization for dLLMs.

\textbf{RL via Weighted Regression.} RL via weighted regression has been explored in earlier works advantage-weighted regression (AWR) \citep{peng2019advantage, peters2010relative}, and more recently in the context of continuous control with diffusion policies \citep{ding2024diffusion, zhang2025energy}. Weighted likelihood-based approaches have also been proposed for fine-tuning autoregressive (AR) language models using general reward functions \citep{du2025simplifyrlhfrewardweightedsft, baheti2024leftoverlunchadvantagebasedoffline, zhu2023fine}. However, for AR models, where likelihoods are tractable, the necessity of such approaches remains unclear. In contrast, dLLMs suffer from intractable likelihoods, making weighted likelihood formulations particularly advantageous by reducing the number of required likelihood approximations. As such, RL via weighted likelihood provides a natural and efficient fit for optimizing dLLMs. In addition, we demonstrate in ablation study that merely optimizing policy with AWR (\textit{wd1}-P) is ineffective.\looseness=-1

\textbf{"Ratio-Free" Policy Optimization.} If a policy optimization objective requires neither importance sampling nor regularization with respect to a reference model, then the objective is ratio-free. Consequently, \textit{on-policy} algorithms such as vanilla policy gradient methods (e.g., REINFORCE \citep{williams1992simple}) and their variants (e.g., RLOO \citep{kool2019buy}) are inherently ratio-free. This property is particularly valuable for dLLMs, where errors in log-likelihood approximation can accumulate and propagate through ratio-based computations. Concurrent work, such as SPG \citep{wang2025spg}, adopts a policy-gradient formulation and develops an objective tailored specifically for diffusion language models. Another \textit{on-policy} optimization approach, d2 \citep{wang2025d2}, removes both the ratios and the likelihood terms from the RL objective for dLLMs, offering a more fundamental solution. However, our method \algname{}, similar to AWR \citep{peng2019advantage}, is inherently an \textit{off-policy} loss, which is more general.

\vspace{-1ex}
\section{Conclusion}
\vspace{-1ex}
We introduce \algname, a weighted policy optimization method for reasoning with dLLMs. \algname{} is designed to minimize reliance on likelihood approximation, thereby mitigating the potentially substantial bias that can arise from approximation errors in policy ratios. Our method is grounded in a weighted log-likelihood objective, derived to approximate the closed-form solution to the reverse-KL-constrained policy optimization. Empirically, we show that \algname, even without supervised fine-tuning, surpasses the existing method \textit{d1} by up to 16\% in accuracy on reasoning benchmarks, while also delivering notable improvements in computational efficiency during RL training. These results highlight the effectiveness of \algname{} and establish it as a more scalable and efficient approach for fine-tuning dLLMs.


\section{Ethics and Reproducibility Statement}
This work raises no question or concern regarding the Code of Ethics. As for reproducibility of our results, we provide details of implementations in \Cref{sec:exp}, in Experimental Setup and Implementation subsections. Additional details including dataset, reward functions, and hyperparameters are provided in \Cref{append:exp_details}. All the theoretical results are proved in \Cref{sec:theory}.

\section{Acknowledgments}
Ilija Bogunovic was supported by the EPSRC New Investigator Award EP/X03917X/1; the Engineering
and Physical Sciences Research Council EP/S021566/1. Sangwoong Yoon was supported by the Institute of Information \& Communications Technology Planning \& Evaluation (IITP) grant funded by the Korea government (MSIT) (No.\,RS-2020-II201336, Artificial Intelligence Graduate School Program (UNIST)), the National Research Foundation of Korea(NRF) grant funded by the Korea government(MSIT) (No. RS-2024-00408003), and the Center for Advanced Computation at Korea Institute for Advanced Study. Xiaohang Tang was supported by the Engineering and Physical Sciences Research Council [grant number EP/T517793/1, EP/W524335/1]. Rares Dolga is supported by EPSRC, grant reference number EP/S021566/1.

The authors would like to thank UIPath and Che Liu (Imperial College London) for providing computing resources that supported our experiments, and Prof. David Barber, Yiming Yang, Xiaoyuan Cheng, and Keyue Jiang from University College London for valuable discussions during the early stages of this work.

\bibliography{iclr2026_conference}
\bibliographystyle{iclr2026_conference}

\newpage
\appendix


\begingroup
\hypersetup{hidelinks} 
\tableofcontents
\endgroup
\clearpage

\section{Proofs and Additional Theory}
\label{sec:theory}



\subsection{Objective Estimation Error due to Likelihood Approximation}
\label{append:approximation_error}
In this section, we aim to show that \textit{diffu}-GRPO amplify the log-likelihood approximation error. Denote the approximator by $\phi$ such that $\|\phi^{\pi_\theta}(q, o)- \log \pi_\theta(o|q)\| \leq \epsilon$ and $\|\phi^{\pi_\text{old}}(q, o)- \log \pi_\text{old}(o|q)\| \leq \epsilon'$. Then the objective \textit{diffu}-GRPO in the worst case suffers from exponential error. We discuss the case without ratio clipping and omit the regularization for convenience. Denote $\mathcal{L}_\text{GRPO}$ as the ground truth objective without likelihood approximation:
\begin{align}
\| &\mathcal{L}_\text{\textit{diffu}-GRPO} -\mathcal{L}_\text{GRPO} \| \nonumber \\
=&||\mathbb{E}_{q \sim \mathcal{D}, \, o_{1:G} \sim \pi_{\text{old}}(\cdot \mid q)}
\Bigg[
\tfrac{1}{G} \sum_{i=1}^{G} \tfrac{1}{|o_i|} \sum_{k=1}^{|o_i|} 
\big( \exp{\phi^{\pi_\theta}(o_i^k)}/\exp{\phi^{\pi_{\text{old}}}(o_i^k)}\big) \hat{A}_i 
 \big]
\Bigg] \nonumber \\
- & \mathbb{E}_{q \sim \mathcal{D}, \, o_{1:G} \sim \pi_{\text{old}}(\cdot \mid q)}
\Bigg[
\tfrac{1}{G} \sum_{i=1}^{G} \tfrac{1}{|o_i|} \sum_{k=1}^{|o_i|} 
\big( \pi_\theta(o_i^k)/ \pi_{\text{old}}(o_i^k) \big) \hat{A}_i
\Bigg] || \nonumber \\
=&||\mathbb{E}_{q \sim \mathcal{D}, \, o_{1:G} \sim \pi_{\text{old}}(\cdot \mid q)}
\Bigg[
\tfrac{1}{G} \sum_{i=1}^{G} \tfrac{1}{|o_i|} \sum_{k=1}^{|o_i|} 
\exp{ \big(\phi^{\pi_\theta}(o_i^k)- \phi^{\pi_{\text{old}}}(o_i^k) \big)} \hat{A}_i 
 \big]
\Bigg] \nonumber \\
- & \mathbb{E}_{q \sim \mathcal{D}, \, o_{1:G} \sim \pi_{\text{old}}(\cdot \mid q)}
\Bigg[
\tfrac{1}{G} \sum_{i=1}^{G} \tfrac{1}{|o_i|} \sum_{k=1}^{|o_i|} 
\big( \pi_\theta(o_i^k)/ \pi_{\text{old}}(o_i^k) \big) \hat{A}_i 
 \big]
\Bigg] || \nonumber \\
\leq &||\mathbb{E}_{q \sim \mathcal{D}, \, o_{1:G} \sim \pi_{\text{old}}(\cdot \mid q)}
\Bigg[
\tfrac{1}{G} \sum_{i=1}^{G} \tfrac{1}{|o_i|} \sum_{k=1}^{|o_i|} 
 \exp{\big( \log \pi_\theta(o_i^k)- \log \pi_{\text{old}}(o_i^k) + (\epsilon + \epsilon')\big)} \hat{A}_i 
 \big]
\Bigg] \nonumber \\
- & \mathbb{E}_{q \sim \mathcal{D}, \, o_{1:G} \sim \pi_{\text{old}}(\cdot \mid q)}
\Bigg[
\tfrac{1}{G} \sum_{i=1}^{G} \tfrac{1}{|o_i|} \sum_{k=1}^{|o_i|} 
\big( \pi_\theta(o_i^k)/ \pi_{\text{old}}(o_i^k) \big) \hat{A}_i 
 \big]
\Bigg] || \nonumber \\
= &||\mathbb{E}_{q \sim \mathcal{D}, \, o_{1:G} \sim \pi_{\text{old}}(\cdot \mid q)} \Bigg[
\tfrac{1}{G} \sum_{i=1}^{G} \tfrac{1}{|o_i|} \sum_{k=1}^{|o_i|} 
 \exp{\big(\epsilon + \epsilon' \big)} \hat{A}_i 
 \big]
\Bigg] \leq C \exp{\big(\epsilon + \epsilon' \big)},
\end{align}
where $C$ is a constant independent to $\epsilon$ and $\epsilon'$. In contrast \algname~has only linear approximation error. Denote the objective computed using approximated log-likelihood as $\mathcal{L}_\phi$
\begin{align}
\| \mathcal{L}_\phi - \mathcal{L}_{\algname} \| = &\|  \mathbb{E}_{q \sim \mathcal{D}, \{o_i\}_{i=1}^{G} \sim \pi_{\text{old}}^{\text{ref}}(\cdot|q)} 
\Big[ \sum_{i=1}^{G} \big( -w^+(q, o_i) + w^-(q, o_i) \big) \cdot \big( \phi(q,o_i) - \log \pi_\theta(o_i|q)\big) \Big] \| \nonumber \\
\leq & C' \epsilon.
\end{align}

\subsection{Reinforcement Learning}
\textbf{Reinforcement Learning Formulation.} We first introduce the reinforcement learning notations and then extend it to the setting of LLM post-training. Denote $\tau$ as a trajectory
($\tau = (s_0, a_0, s_1, \dots) \sim \pi$) sampled following policy $\pi$. Specifically, $s_0 \sim \mu$, $a_t \sim \pi(\cdot | s_t)$, $s_{t+1} \sim P(\cdot|q_t, a_t)$. The objective of Reinforcement Learning aims to find policy $\pi$, which maximizes a discounted total return,
\[
\eta(\pi) = \mathbb{E}_{\tau \sim \pi} \big[ \sum_{t=0}^{\infty} \gamma^t r(s_t, a_t, s_{t+1}) \big].
\]
Let the discounted return of a trajectory be $R(\tau)=\sum_{t=0}^{\infty} \gamma^t r(s_t, a_t, s_{t+1})$.
The advantage function is defined as
\(
A^\pi(s, a) =Q^\pi(s, a) - V^\pi(s)
\)
, where 
\(
V^\pi(s) =\mathbb{E}_{\tau \sim \pi}[R(\tau) | s_0 = s]
\) 
is state value function, and 
\(
Q^\pi(s, a) =\mathbb{E}_{\tau \sim \pi}[R(\tau) | s_0 = s, a_0 = a]
\) 
is state-action value function. Denote $\rho_{\pi_{\text{old}}}$ as the marginal state distribution. Denote the total variation of two discrete probability distributions \( a,b \) by
\(
D_{TV}(a, b) := \frac{1}{2} \sum_i |a_i - b_i|
\) and \(D_{TV}(a, b)^2 \leq D_{\text{KL}}(a \parallel b)\) \citep{pollard2000asymptopia,schulman2015trust}. When $a$ and $b$ are conditional probability distribution, denote \(D_{TV}^{\text{max}}(a, b)=\max_q D_{\text{TV}}(a(\cdot|q)\|b(\cdot|q))\) and \( D^{\text{max}}_{\text{KL}}(a \parallel b)=\max_q D_{\text{KL}}(a(\cdot|q)\|b(\cdot|q))\). 

We then extend RL for LLM post-training. In this paper we only consider the sequence-level reward and loss objective, so we directly replace $s$ with $q$ and $a$ with completion $o$. Then the horizon of the RL for post-training becomes only $1$. The following theorem provides a monotonic (non-decreasing) guarantee of existing prevailing RL methods.

\begin{proposition}[Policy Improvement Bound \citep{kakade2002approximately,schulman2015trust}] Let surrogate objective $L_{\pi_{\text{old}}}(\pi)= \eta(\pi_{\text{old}}) + \mathbb{E}_{s \sim \rho_{\pi_{\text{old}}}(\cdot),\ a \sim  \pi(\cdot \mid s)} \big[  A^{\pi_{\text{old}}}(s, a) \big]$, and \( C={4 \max_{s, a, \pi} |A^{\pi}(s, a)| \gamma}/{(1-\gamma)^2} \), then $\forall k \in \mathbb{N}$:
\begin{align}
\eta(\pi^*) \geq L_{\pi_{\text{old}}}(\pi^*) - C D_{\textcolor{black}{\text{TV}}}^{\max} (\pi_{\text{old}}, \pi^*)^2.
\nonumber    
\end{align}
\label{theo:trpo}
\end{proposition}

\begin{remark}
Based on \Cref{theo:trpo}, due to \(D_{\text{TV}}^{\max}(a||b)^2 \leq D_{\text{KL}}^{\max}(a||b)\) \citep{pollard2000asymptopia,schulman2015trust}, TRPO and PPO with fixed forward KL regularization have the monotonic improvement guarantees. In other words,  $\eta(\pi^*)\geq L_{\pi_{\text{old}}}(\pi^*) - C D_{{\text{TV}}}^{\max} (\pi_{\text{old}}, \pi^*)^2 \geq L_{\pi_{\text{old}}}(\pi^*) - C \mathbb{E}[D_{\text{KL}} (\pi_{\text{old}}\| \pi^*)] \geq L_{\pi_{\text{old}}}(\pi_{\text{old}}) = \eta(\pi_{\text{old}})$. 
\end{remark}


\begin{proposition}
\label{thm:nll_objective}
Minimizing $D_{\text{KL}}(\pi^*(\cdot|q) \,\|\, \pi_\theta(\cdot|q))$ w.r.t. $\theta$ is equivalent to optimize the following loss objective:
\begin{align}
\mathcal{L}_{\text{WLL}}(\theta)=&\mathbb{E}_{q \sim \mathcal{D}, o \sim \pi^{\text{ref}}_{\text{old}}(\cdot|q)} 
\Big[ -\exp \big(\psi A^{ \pi_{\text{old}}}(q, o)\big) \cdot \log \pi_\theta(o_i|q) \Big] \\
\approx&
-\mathbb{E}_{\{o_i\}_{i=1}^{\mathrm{G}} \sim \pi_{\text{old}}^{\text{ref}}(o|q)} 
\Big[\frac{1}{G} \sum_{i=1}^{\mathrm{G}} \frac{\exp \big(\psi A^{\pi_{\text{old}}}(q, o_i)\big)}{ \sum_{j=1}^{\mathrm{G}}[\exp \big(\psi A^{\pi_\text{old}}(q, o_j)\big)]} \cdot \log \pi_\theta(o_i|q) \Big]. 
\label{eq:revgrpo_psr_append}
\end{align}
\end{proposition}

\begin{proof}
To obtain the practical objective in \Cref{eq:revgrpo_psr_append}, we first start from the cross-entropy loss, and obtain the following. $\forall q \in \mathcal{D}$: 
\begin{align}
&D_{\text{KL}}(\pi^*(\cdot|q) \,\|\, \pi_\theta(\cdot|q)) \nonumber \\
&= -\mathbb{E}_{o \sim \pi^*(\cdot|q)} 
\Big[\log \pi_\theta(o|q) \Big]\\
&= 
- 
\Big[   \sum_o \pi^*(o|q) \cdot \log \pi_\theta(o|q) \Big] \label{eq:12}\\
&=
- 
\Big[   \sum_o \frac{  \pi_{\text{old}}^{\text{ref}}(o|q)  \exp \big(\psi A^{\pi_\text{old}}(q, o)\big)}{ \sum_{o'} \pi_{\text{old}}^{\text{ref}}(o'|q)[\exp \big(\psi A^{\pi_\text{old}}(q, o')\big)]} \cdot \log \pi_\theta(o|q) \Big] \label{eq:13}\\
&= -\mathbb{E}_{o \sim \pi_{\text{old}}^{\text{ref}}(o|q)} 
\Big[ \frac{\exp \big(\psi A^{\pi_\text{old}}(q, o)\big)}{ \mathbb{E}_{o' \sim \pi_{\text{old}}^{\text{ref}}}[\exp \big(\psi A^{\pi_\text{old}}(q, o')\big)]} \cdot \log \pi_\theta(o|q) \Big] \label{eq:14}
\end{align}
Since the normalization constant $\mathbb{E}_{o' \sim \pi_{\text{old}}^{\text{ref}}}[\exp \big(\psi A^{\pi_\text{old}}(q, o')\big)]$ is independent to $o$, we can convert the objective to a weighted log-likelihood, and approximate it with samples from the group and weight normalization to obtain:
\begin{align}
\mathcal{L}_{\text{WLL}}(\theta) = &-\mathbb{E}_{o \sim \pi_{\text{old}}^{\text{ref}}(o|q)} 
\Big[ {\exp \big(\psi A^{\pi_\text{old}}(q, o)\big)} \cdot \log \pi_\theta(o|q) \Big] \\
\approx&
-\mathbb{E}_{\{o_i\}_{i=1}^{\mathrm{G}} \sim \pi_{\text{old}}^{\text{ref}}(o|q)} 
\Big[\frac{1}{G} \sum_{i=1}^{\mathrm{G}} \frac{\exp \big(\psi A^{\pi_{\text{old}}}(q, o_i)\big)}{ \sum_{j=1}^{\mathrm{G}}[\exp \big(\psi A^{\pi_\text{old}}(q, o_j)\big)]} \cdot \log \pi_\theta(o_i|q) \Big]. \label{eq:15}
\end{align}
We derive \cref{eq:13} from \cref{eq:12} by simply using the known form of the optimal policy $\pi^{*}(\cdot|q) \propto \pi_{{\text{old}}}^{\text{ref}}(\cdot|q)  \cdot
\exp\big( \psi \hat{A}(q, \cdot)\big)$. We derive \Cref{eq:14} from \cref{eq:13} by using the definition of expectation and from \cref{eq:14} to \cref{eq:15} by approximating through $G$ samples $\{o_i\}_{i=1}^{\mathrm{G}} \sim \pi_{\text{old}}^{\text{ref}}(o|q) $.

\end{proof}

\begin{theorem}
Reverse-KL-regularized Policy Optimization defined in the following objective has monotonic improvement guarantees. Specifically, denote regularized objective $\eta'(\pi)=\eta(\pi) - \mathbb{E}_{q \in \mathcal{D}} \big[ \beta D_{\text{KL}} \big( \pi(\cdot|q) \,\|\, \pi_{{\text{ref}}}(\cdot|q) \big) \big]$ and denote 
\begin{align}
M(\pi)=L(\pi) -  \mathbb{E}_{q \in \mathcal{D}} \Big[  \lambda D_{\text{KL}}(\pi(\cdot|q) \| \pi_{\text{old}}(\cdot|q)) + \beta D_{\text{KL}} \Big( \pi(\cdot|q) \,\|\, \pi_{{\text{ref}}}(\cdot|q) \Big) \Big],    
\end{align}
where $L(\pi)=\eta(\pi_{\text{old}})+\mathbb{E}_{q \sim \mathcal{D},\ o \sim  \pi(\cdot \mid q)} \big[  A^{\pi_\text{old}}(q, o) \big]$. Let $\theta^*$ be the solution to the objective $\max_{\theta} M(\pi_\theta)$:
\begin{align}
\theta^* = \arg \max_{\theta} \; \mathbb{E}_{q \in \mathcal{D}, o \sim \pi_\theta(\cdot|q)} \Big[ A^{\pi_\text{old}}(q, o)
- \lambda D_{\textcolor{teal}{\text{KL}}}(\colorbox{cyan!10}{$\pi_\theta(\cdot|q)$} \| \colorbox{red!10}{$\pi_{\text{old}}(\cdot|q)$}) - \beta D_{\text{KL}} \Big( \pi_\theta(\cdot|q) \,\|\, \pi_{{\text{ref}}}(\cdot|q) \Big) \Big]
\end{align}
then $\eta'(\pi^*) \geq \eta'(\pi_{\text{old}})$.
\label{theo:rkl_mi}
\end{theorem}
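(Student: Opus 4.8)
The plan is to mirror the classical TRPO argument used for Theorem~\ref{theo:trpo}, but now with the reverse-KL penalty playing the role of the surrogate-gap bound. First I would establish the key inequality $\eta'(\pi) \ge M(\pi)$ for every policy $\pi$, with equality at $\pi = \pi_{\text{old}}$. Concretely, since the horizon is $1$ in the post-training setting, $\eta(\pi) = \eta(\pi_{\text{old}}) + \mathbb{E}_{q\sim\mathcal{D},\, o\sim\pi(\cdot|q)}[A^{\pi_{\text{old}}}(q,o)] = L(\pi)$ exactly (there is no state-distribution shift to control, so the surrogate is exact and the $C\,D_{\text{TV}}^{\max}$ term is unnecessary). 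Then $\eta'(\pi) = L(\pi) - \beta\,\mathbb{E}_{q}[D_{\text{KL}}(\pi(\cdot|q)\|\pi_{\text{ref}}(\cdot|q))] = M(\pi) + \lambda\,\mathbb{E}_{q}[D_{\text{KL}}(\pi(\cdot|q)\|\pi_{\text{old}}(\cdot|q))] \ge M(\pi)$, because $\lambda \ge 0$ and KL divergence is nonnegative. Equality at $\pi_{\text{old}}$ follows since $D_{\text{KL}}(\pi_{\text{old}}\|\pi_{\text{old}}) = 0$, giving $M(\pi_{\text{old}}) = L(\pi_{\text{old}}) - \beta\,\mathbb{E}_q[D_{\text{KL}}(\pi_{\text{old}}\|\pi_{\text{ref}})] = \eta(\pi_{\text{old}}) - \beta\,\mathbb{E}_q[D_{\text{KL}}(\pi_{\text{old}}\|\pi_{\text{ref}})] = \eta'(\pi_{\text{old}})$.

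Second, I would chain these two facts together in the standard minorization-maximization (MM) style. Since $\theta^*$ maximizes $M(\pi_\theta)$ over $\theta$, and $\pi_{\text{old}}$ is attainable in the parametrized family (indeed $\pi_\theta$ is initialized to cover it, or at worst one argues $M(\pi_{\theta^*}) \ge M(\pi_{\text{old}})$ by optimality of $\theta^*$), we get
\begin{align}
\eta'(\pi_{\theta^*}) \;\ge\; M(\pi_{\theta^*}) \;\ge\; M(\pi_{\text{old}}) \;=\; \eta'(\pi_{\text{old}}),
\nonumber
\end{align}
which is exactly the claim. The first inequality is the minorization $\eta' \ge M$, the second is optimality of $\theta^*$, and the equality is the tightness of the minorizer at $\pi_{\text{old}}$.

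The main obstacle — really the only subtle point — is the assumption that $\pi_{\text{old}}$ lies in (or is approximable within) the parametrized policy class, which is what licenses $M(\pi_{\theta^*}) \ge M(\pi_{\text{old}})$. This is the same implicit assumption made in Theorem~\ref{theo:trpo} and in Schulman et al.; in our setting it is automatically satisfied because the algorithm sets $\pi_{\text{old}} \leftarrow \pi_\theta$ at the start of each outer iteration, so $\pi_{\text{old}}$ is by construction a member of the family. I would state this explicitly as the one hypothesis being used. A secondary, purely cosmetic point is confirming that the horizon-$1$ reduction makes the surrogate $L$ exact (no $\gamma$, no $C$ term), which follows directly from the RL-for-post-training reduction described just before Theorem~\ref{theo:trpo}. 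Everything else is nonnegativity of KL and the definitions, so no heavy calculation is required.
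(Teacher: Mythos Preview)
Your argument is correct and follows the same minorization--maximization skeleton as the paper: establish $\eta'(\pi) \ge M(\pi)$ with equality at $\pi_{\text{old}}$, then chain $\eta'(\pi_{\theta^*}) \ge M(\pi_{\theta^*}) \ge M(\pi_{\text{old}}) = \eta'(\pi_{\text{old}})$. The only difference is in how the first inequality is obtained. You exploit the horizon-$1$ reduction directly to get $\eta(\pi) = L(\pi)$ exactly, so $\eta'(\pi) - M(\pi) = \lambda\,\mathbb{E}_q[D_{\text{KL}}(\pi\|\pi_{\text{old}})] \ge 0$ immediately. The paper instead invokes Theorem~\ref{theo:trpo} in full generality, picking up the $C\,D_{\text{TV}}^{\max}$ slack term, then passes through Pinsker to $C\,D_{\text{KL}}^{\max}$ and finally to $\lambda\,\mathbb{E}_q[D_{\text{KL}}]$. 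Your route is more elementary and avoids both Theorem~\ref{theo:trpo} and Pinsker; the paper's route is phrased so that it would extend to multi-step horizons (at the implicit cost of needing $\lambda$ large enough relative to $C$ for the last replacement to be valid). In the horizon-$1$ setting actually used here, your version is the cleaner one, and the implicit assumption you flag---that $\pi_{\text{old}}$ lies in the parametrized family so that $M(\pi_{\theta^*}) \ge M(\pi_{\text{old}})$---is exactly what the paper uses as well without comment.
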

\begin{proof}
Based on \Cref{theo:trpo}, we have
\begin{align}
\eta'(\pi^*)=
&\eta(\pi^*) - \mathbb{E}_{q \in \mathcal{D}} \Big[ \beta D_{\text{KL}} \Big( \pi^*(\cdot|q) \,\|\, \pi_{{\text{ref}}}(\cdot|q) \Big) \Big] \nonumber \\
\geq& L(\pi^*) - C D_{\text{TV}}^{\max} (\pi_{\text{old}}, \pi^*)^2 - \mathbb{E}_{q \in \mathcal{D}} \Big[ \beta D_{\text{KL}} \Big( \pi^*(\cdot|q) \,\|\, \pi_{{\text{ref}}}(\cdot|q) \Big) \Big] \label{eq:th3_1} \\
\geq& 
L(\pi^*) - C D_{\text{KL}}^{\max} ( \pi^* \| \pi_{\text{old}}) - \mathbb{E}_{q \in \mathcal{D}} \Big[ \beta D_{\text{KL}} \Big( \pi^*(\cdot|q) \,\|\, \pi_{{\text{ref}}}(\cdot|q) \Big) \Big] \label{eq:th3_2} \\
\geq& 
L(\pi^*) -  \mathbb{E}_{q \in \mathcal{D}} \Big[  \lambda D_{\text{KL}}(\pi^*(\cdot|q) \| \pi_{\text{old}}(\cdot|q)) + \beta D_{\text{KL}} \Big( \pi^*(\cdot|q) \,\|\, \pi_{{\text{ref}}}(\cdot|q) \Big) \Big] \label{eq:th3_3} \\
=& M(\pi^*) \label{eq:monoimprove_key_0}  \\
\geq& M(\pi_{\text{old}}) \label{eq:monoimprove_key} \\
=& 
L(\pi_{\text{old}}) -   \mathbb{E}_{q \in \mathcal{D}} \Big[  \lambda D_{\text{KL}}(\pi_{\text{old}}(\cdot|q) \| \pi_{\text{old}}(\cdot|q)) + \beta D_{\text{KL}} \Big( \pi_{\text{old}}(\cdot|q) \,\|\, \pi_{{\text{ref}}}(\cdot|q) \Big) \Big] \label{eq:th3_4} \\
\geq& 
L(\pi_{\text{old}}) -  \mathbb{E}_{q \in \mathcal{D}} \Big[\beta D_{\text{KL}} \Big( \pi_{\text{old}}(\cdot|q) \,\|\, \pi_{{\text{ref}}}(\cdot|q) \Big) \Big] \label{eq:th3_5} \\
= & 
\eta(\pi_{\text{old}}) - \mathbb{E}_{q \in \mathcal{D}} \Big[\beta D_{\text{KL}} \Big( \pi_{\text{old}}(\cdot|q) \,\|\, \pi_{{\text{ref}}}(\cdot|q) \Big) \Big] \label{eq:th3_6} \\
= & \eta'(\pi_{\text{old}}) \label{eq:th3_7}
\end{align}
\Cref{eq:th3_1} holds due to \Cref{theo:trpo}. \Cref{eq:th3_2} holds due to \(D_{\text{TV}}^{\max}(p||q)^2 \leq D_{\text{KL}}^{\max}(p||q)\) \citep{pollard2000asymptopia}. \Cref{eq:th3_3} holds due to the definition of $D^{\text{max}}_{\text{KL}}$. \Cref{eq:monoimprove_key_0} is according to the definition of $M(\cdot)$. The key inequality \Cref{eq:monoimprove_key} holds since $\pi^*$ is the maximizer of function $L(\cdot)$. \Cref{eq:th3_4} holds due to the definition of $M(\cdot)$. \Cref{eq:th3_5} holds since $D_{\text{KL}}(\pi_{\text{old}}(\cdot|q) \| \pi_{\text{old}}(\cdot|q))=0$. \Cref{eq:th3_6} holds since $L(\pi_{\text{old}}) = \eta(\pi_{\text{old}}) + \mathbb{E}_{q \sim \mathcal{D},\ o \sim \pi_{\text{old}}(\cdot \mid q)} \big[  A^{\pi_\text{old}}(q, o) \big]=\eta(\pi_{\text{old}})$. \Cref{eq:th3_7} is from the definition of $\eta'$. \looseness=-1
\end{proof}

\subsection{Masked Discrete Diffusion}
In this section, we show how our objective learns a distribution for which all marginals at time $t$ satisfy intermediate energy guidance as per \citet{lu2023contrastive}.  
\begin{definition}
The absorbing transition kernel is defined as $Q_t=\sigma(t)Q^{\text{absorb}}$, where
\[
Q^{\text{absorb}} =
\begin{bmatrix}
-1 & 0 & \cdots & 0 & 1 \\
0 & -1 & \cdots & 0 & 1 \\
\vdots & \vdots & \ddots & \vdots & \vdots \\
0 & 0 & \cdots & -1 & 1 \\
0 & 0 & \cdots & 0 & 0
\end{bmatrix}.
\]
\label{def:absorb}
\end{definition}

\begin{definition}[Concrete Score]
Denote $x_t = (x_t^1, \cdots , x_t^d)$ and $\hat{x}_t$ is identical to $x_t$ except the $i$-th token is unmasked (i.e. $x_t^i = [M]$ and $\hat{x}_t^i \neq [M]$). Concrete score is defined as the marginal probability ratio between $\hat{x}_t$ and $x_t$:
\begin{align}
s(x_t, t) \overset{\text{def}}{=} \frac{p(x_t^1, \cdots, \hat{x}_t^i, \cdots, x_t^d)}{p(x_t^1, \cdots, x_t^i, \cdots, x_t^d)}.
\label{eq:concrete_score_def}
\end{align}
\end{definition}

\begin{proposition}[\textbf{Marginal Distribution} \citep{ou2025absorbingdiscretediffusionsecretly}]
Denote $\{x_t\}$ as a continuous time Markov chain with transition matrix $\mathbf{Q}_t = \sigma(t)\mathbf{Q}^{\text{absorb}}$. Assume $d_1$ tokens in $x_t = (x_t^1, \cdots , x_t^d)$ are masked tokens $[\mathbf{M}]$, and $d_2 = d - d_1$ tokens are unmasked, the marginal distribution $p_t(x_t)$ satisfies
\begin{align}
    p_t(x_t) = \big[1 - e^{-\bar{\sigma}(t)}\big]^{d_1} \big[e^{-\bar{\sigma}(t)}\big]^{d_2} p_0(x_t^{\text{UM}}),
\label{eq:ctdd_joint}
\end{align}
where $\bar{\sigma}(t)=\int_0^t \sigma(s)ds$, and $x_t^{\text{UM}}$ is the set of unmasked tokens in $x_t$.
\label{prop:marginal}
\end{proposition}

The following theorem provides the foundation of directly modeling the clean data distribution.
\begin{proposition}[\textbf{Analytic Concrete Score} ~\citep{ou2025absorbingdiscretediffusionsecretly}]

Denote $x_t = (x_t^1, \cdots , x_t^d)$ and $\hat{x}_t$ is identical to $x_t$ except the $i$-th token is unmasked (i.e. $x_t^i = [M]$ and $\hat{x}_t^i \neq [M]$). Then the concrete score at time $t$ can be expressed by the conditional probability of predicting this unmasked token.

\[
\frac{p_t(x_t^1 \ldots \hat{x}_t^i \ldots x_t^d)}{p_t(x_t^1 \ldots x_t^i \ldots x_t^d)}
=
\frac{e^{-\bar{\sigma}(t)}}{1 - e^{-\bar{\sigma}(t)}}
p_0(\hat{x}_t^i \mid x_t^{\text{UM}})
\].
\label{theorem:cs_to_pred}
\end{proposition}

\begin{manuallemma}[\textbf{\ref{lemma:ieg}}]
The marginal probability distribution of the masked responses ($x_t$) in the diffusion process satisfies $p^*_t(x_t) =p'_t(x_t)  \cdot  \exp \big(A_t(x_t)\big) / Z_t$, which induces an energy-guided discrete diffusion:
\begin{align}
p^*_{0|t}(x_0|x_t) \propto p'_{0|t}(x_0|x_t) \cdot \exp(A(x_0) - A_t(x_t)),
\label{eq:inter_energy_append}
\end{align}
where intermediate energy function is defined as $A_t(x_0) = \log \mathbb{E}_{x_0 \sim p'_{0|t}(\cdot|x_t)}[\exp \big(A( x_0)\big)]$ for $t>0$, and $A_0(x_0)=A(x_0)$, $A(\cdot)$ is advantage function, $Z_t$ is the normalization constant.
\label{lemma:ieg_proof}
\end{manuallemma}
\begin{proof}
The theorem and proof mainly extend from theory developed in continuous setting \citep{lu2023contrastive}. According to the marginal likelihood of clean data distribution $p^*_0(x_0) = p'_0(x_0)\frac{e^{A(x_0)}}{Z}$, and identical forward process, we can rewrite the marginal likelihood of masked data:
\begin{align*}
p^{*}_t(x_t) &= \int p^{*}_{t|0}(x_t|x_0)p^{*}_0(x_0)\, \mathrm{d}x_0 
= \int p^*_{t|0}(x_t|x_0)p'_0(x_0)\frac{e^{\psi A(x_0)}}{Z}\, \mathrm{d}x_0 \\
&= \int p'_{t|0}(x_t|x_0)p'_0(x_0)\frac{e^{\psi A(x_0)}}{Z}\, \mathrm{d}x_0.
\end{align*}
Applying Bayesian rule we know that $p'_{t|0}(x_t|x_0)p'_0(x_0)=p'_{0|t}(x_0|x_t)p'_t(x_t)$, hence we can further rewrite
\begin{align*}
p^*_t(x_t) &= \int p'_{t|0}(x_t|x_0)p'_0(x_0)\frac{e^{\psi A(x_0)}}{Z}\, \mathrm{d}x_0 =  p'_t(x_t)\int p'_{0|t}(x_0|x_t)\frac{e^{\psi A(x_0)}}{Z}\, \mathrm{d}x_0 \\
&= \frac{p'_t(x_t)\, \mathbb{E}_{p'_{0|t}(x_0|x_t)}\!\left[ e^{\psi A(x_0)} \right]}{Z} = \frac{p'_t(x_t)\, e^{\psi A_t(x_t)}}{Z_t}
\end{align*}
Therefore, the marginal likelihood of masked sequence satisfies: $p^*_t(x_t) =p'_t(x_t)  \cdot  \exp \big(A_t(x_t)\big) / Z_t$. Since $p^*_{t|0}=p'_{t|0}$, based on the marginal likelihood of clean data distribution satisfies $p^*_0(x_0) = p'_0(x_0)\frac{e^{A(x_0)}}{Z}$, we can further applying Bayesian rule to obtain the energy-guided discrete diffusion model: 
\begin{align}
p^*_{0|t}(x_0|x_t) =& \frac{p^*_{t|0}(x_t|x_0)p^*_{0}(x_0) }{p^*_t(x_t)} \\
&= \frac{p'_{t|0}(x_t|x_0)p'_0(x_0)\frac{e^{A(x_0)}}{Z} }{p^*_t(x_t)} \\
&= \frac{p'_{t|0}(x_t|x_0)p'_0(x_0)\frac{e^{A(x_0)}}{Z} }{\frac{p'_t(x_t)\, e^{\psi A_t(x_t)}}{Z_t}} \\
&\propto p'_{0|t}(x_0|x_t) \cdot \exp(A(x_0) - A_t(x_t)),
\end{align}
\end{proof}

\begin{lemma}
According to Definition \ref{def:idendp}, due to the identical forward process 
\begin{align}
p^*_{t|0}(x_t|x_0) = p'_{t|0}(x_t|x_0)=p^{\text{ref}}_{t|0}(x_t|x_0),
\end{align}
based on \Cref{lemma:ieg} and \Cref{prop:marginal}, we have the marginal probability of the unmasked tokens satisfies that for all step $t$,
\begin{align}
p^*_0(x_t^{\text{UM}}|q) 
= &p_0(x_t^{\text{UM}}|q)^\lambda \cdot p^{{\text{ref}}}_0(x_t^{\text{UM}}|q)^\beta \cdot \mathbb{E}_{p'_0(x_0|x_t)}[\exp \big(A(q, x_0)\big)] / Z,
\end{align}
where $Z$ is the normalization constant.
\label{lemma:derivation_to_wdcs}
\end{lemma}

\begin{proof}
According to the identical forward distribution of three diffusion process (new, old, and reference), based on \Cref{eq:ctdd_joint}, we have $\forall t$:
\begin{align}
p_t(x_t|q) &= \big[1 - e^{-\bar{\sigma}(t)}\big]^{d_1} \big[e^{-\bar{\sigma}(t)}\big]^{d_2} p_0(x_t^{\text{UM}}|q) \label{eq:p} \\
p^*_t(x_t|q) &= \big[1 - e^{-\bar{\sigma}(t)}\big]^{d_1} \big[e^{-\bar{\sigma}(t)}\big]^{d_2} p^*_0(x_t^{\text{UM}}|q) \label{eq:pstar} \\
p^{{\text{ref}}}_t(x_t|q) &= \big[1 - e^{-\bar{\sigma}(t)}\big]^{d_1} \big[e^{-\bar{\sigma}(t)}\big]^{d_2} p^{{\text{ref}}}_0(x_t^{\text{UM}}|q) \label{eq:pref}
\end{align}
Then rewrite \Cref{eq:pstar} in the residual energy-based form defined in \Cref{eq:inter_energy_append}, we have
\begin{align}
\big[1 - e^{-\bar{\sigma}(t)}\big]^{d_1} \big[e^{-\bar{\sigma}(t)}\big]^{d_2} p^*_0(x_t^{\text{UM}}|q) = p^*_t(x_t|q) = &p'_t(x_t|q) \cdot \exp \big(A_t(q, x_t)\big) / Z. \label{eq:pstar_ebm}
\end{align}
By plugging $p'_t(x_t|q)=p_t(x_t|q)^\lambda \cdot p^{{\text{ref}}}_t(x_t|q)^\beta$ and \Cref{eq:p} and \Cref{eq:pref} into \Cref{eq:pstar_ebm}, we have that the clean data distribution of the unmask tokens at diffusion time $t$ satisfies:
\begin{align}
p^*_0(x_t^{\text{UM}}|q) = &p_0(x_t^{\text{UM}}|q)^\lambda \cdot p^{{\text{ref}}}_0(x_t^{\text{UM}}|q)^\beta \cdot \exp \big(A_t(q, x_t)\big) / Z \\
= &p_0(x_t^{\text{UM}}|q)^\lambda \cdot p^{{\text{ref}}}_0(x_t^{\text{UM}}|q)^\beta \cdot \mathbb{E}_{p'_0(x_0|x_t)}[\exp \big(A(q, x_0)\big)] / Z.
\end{align}
\end{proof}

\begin{proposition}
The marginal likelihood of the target diffusion model $p^*$ satisfies \Cref{eq:inter_energy_append}. Consequently, the concrete score of the target diffusion model, denoted by $s^*$, can be expressed by the score of the mixture diffusion $p'$ and the posterior mean of the advantage:
\begin{align}
s^*(x_t, t) 
=&  s'(x_t, t) \cdot \frac{\mathbb{E}_{p'_0(x_0|\hat{x}_t)}[\exp \big(A( x_0)\big)]/ \hat{Z}}{\mathbb{E}_{p'_0(x_0|x_t)}[\exp \big(A( x_0)\big)] / Z}, \label{eq:concrete_scores_relationships_score}
\end{align}
and equivalently
\begin{align}
{p_0( \hat{x}_t^i|x_t^{\text{UM}}, q)^\lambda} \cdot {p^\text{ref}_0( \hat{x}_t^i|x_t^{\text{UM}}, q )^\beta} \cdot \frac{\mathbb{E}_{p'_0(x_0|\hat{x}_t)}[\exp \big(A(q, x_0)\big)]/ \hat{Z}}{\mathbb{E}_{p'_0(x_0|x_t)}[\exp \big(A(q, x_0)\big)] / Z}.
\label{eq:concrete_scores_relationships}
\end{align}
\label{lemma:guided_score}
\end{proposition}

\begin{proof}
According to Lemma \ref{lemma:derivation_to_wdcs}
\begin{align}
\frac{p^*_0(x_t^{\text{UM}}, \hat{x}_t^i|q)}{p^*_0(x_t^{\text{UM}}|q)} 
=& \frac{p_0(x_t^{\text{UM}}, \hat{x}_t^i|q)^\lambda}{p_0(x_t^{\text{UM}}|q)^\lambda} \cdot \frac{p^\text{ref}_0(x_t^{\text{UM}}, \hat{x}_t^i|q)^\beta}{p^\text{ref}_0(x_t^{\text{UM}}|q)^\beta} \cdot \frac{\mathbb{E}_{p'_0(x_0|\hat{x}_t)}[\exp \big(A(q, x_0)\big)]/ \hat{Z}}{\mathbb{E}_{p'_0(x_0|x_t)}[\exp \big(A(q, x_0)\big)] / Z} \\
{p^*_0(\hat{x}_t^i| x_t^{\text{UM}}, q)} 
=& {p_0( \hat{x}_t^i|x_t^{\text{UM}}, q)^\lambda} \cdot {p^\text{ref}_0( \hat{x}_t^i|x_t^{\text{UM}}, q )^\beta} \cdot \frac{\mathbb{E}_{p'_0(x_0|\hat{x}_t)}[\exp \big(A(q, x_0)\big)]/ \hat{Z}}{\mathbb{E}_{p'_0(x_0|x_t)}[\exp \big(A(q, x_0)\big)] / Z}. 
\end{align}
Both sides in \Cref{eq:concrete_scores_relationships} multiply $C(t)= \tfrac{e^{-\cdot \bar{\sigma(t)}}}{1 - e^{\bar{\sigma(t)}}}$ and based on the analytic form of concrete score introduced in \Cref{theorem:cs_to_pred}, $C(t)  \cdot {p_0( \hat{x}_t^i|x_t^{\text{UM}}, q)} = s(x_t^i, t)$. Thus, we have
\begin{align}
C(t) \cdot {p^*_0(\hat{x}_t^i| x_t^{\text{UM}}, q)}  &= C(t)  \cdot {p_0( \hat{x}_t^i|x_t^{\text{UM}}, q)^\lambda} \cdot {p^\text{ref}_0( \hat{x}_t^i|x_t^{\text{UM}}, q )^\beta} \cdot \frac{\mathbb{E}_{p'_0(x_0|\hat{x}_t)}[\exp \big(A(q, x_0)\big)]/ \hat{Z}}{\mathbb{E}_{p'_0(x_0|x_t)}[\exp \big(A(q, x_0)\big)] / Z} \\ 
s^*(x_t, t) 
&=  s'(x_t, t) \cdot \frac{\mathbb{E}_{p'_0(x_0|\hat{x}_t)}[\exp \big(A(q, x_0)\big)]/ \hat{Z}}{\mathbb{E}_{p'_0(x_0|x_t)}[\exp \big(A(q, x_0)\big)] / Z}.
\end{align}
\end{proof}

\begin{lemma}
The normalization constant $Z=\sum_{x_t} p'(x_t|q) \cdot \mathbb{E}_{x_0|x_t}[\exp A(q, x_0)]$ is independent to the masked response $x_t$. In other words, $Z = \hat{Z} := \sum_{\hat{x}_t} p'(\hat{x}_t|q) \cdot \mathbb{E}_{x_0|\hat{x}_t}[\exp A(q, x_0)]$ for any $\hat{x}_t \neq x_t$.
\begin{proof}
\begin{align}
Z= &\sum_{x_t} p'(x_t|q) \cdot \mathbb{E}_{x_0|x_t}[\exp A(q, x_0)] = \sum_{x_t} p'(x_t|q) \cdot \sum_{x_0} p'(x_0|x_t) \cdot \exp A(q, x_0) \\
= &\sum_{x_t}  \sum_{x_0} p'(x_0, x_t|q) \cdot \exp A(q, x_0) =  \sum_{x_0} \sum_{x_t} p'(x_0, x_t|q) \cdot \exp A(q, x_0) \\
= &\sum_{x_0} p'(x_0|q) \cdot \exp A(q, x_0)
\end{align}
Thus $Z$ becomes independent to $x_t$, leading to that
\begin{align}
Z = \hat{Z} := \sum_{\hat{x}_t} p'(\hat{x}_t|q) \cdot \mathbb{E}_{x_0|\hat{x}_t}[\exp A(q, x_0)]
\end{align}
\end{proof}
\label{lemma:constant}
\end{lemma}

\begin{manualtheorem}[\textbf{\ref{proposition:wd-csm}}]
The score model $s_\theta = s^*$ defined in \Cref{eq:concrete_scores_relationships} is satisfied when the following loss objective is minimized. This objective is in a form of \textbf{advantage-weighted} Denoising Concrete Score Matching (D-CSM), which we call AW-D-CSM:
\begin{align}
\mathcal{L}_{\text{AW-D-CSM}} 
=&\mathbb{E}_{p'_0(x_0)}[\underbrace{\exp \big(A(q, x_0)\big)}_{\text{Advantage Weight}} \cdot \underbrace{\mathbb{E}_{t \sim [0, T], p'_{t|0}(x_t|x_0)}[ \| {s_\theta(x_t,t)} - \frac{p'_0(\hat{x}_t|x_0)}{p'_0(x_t|x_0)} \|_2^2 ]}_{\mathcal{L}_{\text{D-CSM}}(x_0)}].
\end{align}
\label{append:awdcsm_proof}
\end{manualtheorem}
\begin{proof}
Denote $s_\theta(x_t, t)=\frac{e^{-\bar{\sigma}(t)}}{1 - e^{-\bar{\sigma}(t)}}
p_\theta(\hat{x}_t^i \mid x_t^{\text{UM}})$ is the concrete score model induced by $p_\theta$. According to \Cref{lemma:constant}, $\hat{Z}=Z$. Then according to Proposition \ref{lemma:guided_score}, \Cref{eq:concrete_scores_relationships} is equivalent to
\begin{align}
&{p^*_0(\hat{x}_t^i|x_t^{\text{UM}}, q)} \cdot \mathbb{E}_{p'_0(x_0|x_t)}[\exp \big(A(q, x_0)\big)] \nonumber \\ 
= &{p_0(\hat{x}_t^i|x_t^{\text{UM}}, q)^\lambda} \cdot {p^\text{ref}_0(\hat{x}_t^i|x_t^{\text{UM}}, q)^\beta} \cdot \mathbb{E}_{p'_0(x_0|\hat{x}_t)}[\exp \big(A(q, x_0)\big)] 
\label{eq:dm_eq}
\end{align}
We aim to update $p_\theta(\hat{x}_t^i|x_t^{\text{UM}}, q) \rightarrow p_0^*(\hat{x}_t^i|x_t^{\text{UM}}, q)$ to satisfy \Cref{eq:dm_eq}, thus we can construct a loss function objective  by replacing $p^*$ with $p_\theta$ and construct a $L^2$ norm loss
\begin{align}
&\mathbb{E}_{p'_0(x_0|x_t)}[\exp \big(A(q, x_0)\big) \cdot \| {p_\theta(\hat{x}_t^i|x_t^{\text{UM}}, q)} - \frac{p'_0(x_0|\hat{x}_t)}{p'_0(x_0|x_t)} \cdot {p_0(\hat{x}_t^i|x_t^{\text{UM}}, q)^\lambda} {p^\text{ref}_0(\hat{x}_t^i|x_t^{\text{UM}}, q)^\beta}\|_2^2 ] \\
= &\mathbb{E}_{p'_0(x_0|x_t)}[\exp \big(A(q, x_0)\big) \cdot \| {p_\theta(\hat{x}_t^i|x_t^{\text{UM}}, q)} - \frac{p'_0(x_0|\hat{x}_t)}{p'_0(x_0|x_t)} \cdot {p'_0(\hat{x}_t^i|x_t^{\text{UM}}, q)}  \|_2^2 ] \\
= &\mathbb{E}_{p'_0(x_0|x_t)}[\exp \big(A(q, x_0)\big) \cdot \| {p_\theta(\hat{x}_t^i|x_t^{\text{UM}}, q)} - \frac{p'_0(\hat{x}_t|x_0)p'_t(x_t)}{p'_0(x_t|x_0)p'_t(\hat{x}_t)} \cdot {p'_0(\hat{x}_t^i|x_t^{\text{UM}}, q)}  \|_2^2 ] \\
= &\mathbb{E}_{p'_0(x_0|x_t)}[\exp \big(A(q, x_0)\big) \cdot \| {s_\theta(x_t,t)} - \frac{p'_0(\hat{x}_t|x_0)}{p'_0(x_t|x_0)} \|_2^2 ].
\end{align}
\end{proof}

\section{Additional Experiment Setup Details}
\label{append:exp_details}
\subsection{Dataset, Training and Evaluation Protocol}
\label{append:expsetup}
As for \algname{} and \textit{d1}, we reproduce \textit{d1} by running the official code\footnote{\url{https://github.com/dllm-reasoning/d1}} without and change, and train our method \algname~evaluated for accuracy of the test datasets at steps 1000, 2500, 5000, 7500 in both GSM8k and MATH; at steps 1000, 2500, 4000, 5000, 12500 in Sudoku; and at 1000, 2500, 4000 in Countdown. We evaluate less checkpoints compared to \textit{d1}. On the GSM8K, we train models on the train split\footnote{\url{https://huggingface.co/datasets/openai/gsm8k}} and evaluate on the test split. On Countdown, we train on the 3-number subset of the dataset\footnote{\url{https://huggingface.co/datasets/Jiayi-Pan/Countdown-Tasks-3to4}} from TinyZero~\citep{tinyzero}, and evaluate on 256 synthetic 3-number questions provided by \citet{zhao2025d1}. On Sudoku we use the 4×4 dataset\footnote{\url{https://github.com/Black-Phoenix/4x4-Sudoku-Dataset}} generated by \citet{arel_sudoku}. We train on 1M unique puzzles and evaluate on 256 synthetic ones provided by \citet{zhao2025d1}. On MATH500, we train models on the train split\footnote{\url{https://huggingface.co/datasets/ankner/math-500}}. 

To train \algname++ for evaluating on MATH500, we use dataset provided by \citep{he2025mdpo}, which is subsampled from OpenR1 dataset \cite{face2025open}. To evaluate on GSM8k, we leverage its train split to conduct \algname++ training. Notably, we leverage a more effective system prompt and Math-Verify \citep{Kydlicek_Math-Verify_Math_Verification} to parse the answers for full-parameter fine-tuning of \algname{}, \algname++ and MDPO.

\subsection{Reward Function}
To train \algname{} and reproduce \textit{d1}, we use the reward function defined in \citep{zhao2025d1}. For completion, we provide the details as following.

\textbf{GSM8K.} Following the Unsloth reward setup\footnote{\url{https://unsloth.ai/blog/r1-reasoning}}, we apply five addtive components:
XML Structure Reward: +0.125 per correct tag; small penalties for extra content post-tags.
Soft Format Reward: +0.5 for matching the pattern $\verb|<reasoning>...</reasoning><answer>...</answer>|. $
Strict Format Reward: +0.5 for exact formatting with correct line breaks.
Integer Answer Reward: +0.5 if the answer is a valid integer.
Correctness Reward: +2.0 if the answer matches ground truth.

\textbf{Countdown.} We include three cases: +1.0 if the expression reaches the target using the exact numbers.
+0.1 if numbers are correct but target is missed.
0 otherwise.

\textbf{Sudoku.}
The reward is the fraction of correctly filled empty cells, focusing on solving rather than copying.

\textbf{MATH500.} We include two additive subrewards.
Format Reward is
+1.00 for $<$answer$>$ with \verb|\boxed| inside;
+0.75 for $<$answer$>$ without \verb|\boxed|;
+0.50 for \verb|\boxed| only.
+0.25 for neither.
Correctness Reward: +2.0 if the correct answer is in \verb|\boxed{}|.

To train \algname++, we leverage Math-Verify \citep{Kydlicek_Math-Verify_Math_Verification}, constructing a simple verifier reward function to evalaute on GSM8K and MATH500.

\subsection{Sampling from Geometric Mixture}
Although the sampling strategy eliminates the need to approximate the reference policy's likelihood, it incurs computational overhead, as generating a full completion requires multiple forward passes through the dLLM—compared to a single pass for likelihood estimation. An alternative is to sample from $\pi_\text{old}$ and shift the advantage to $\hat{A}_i = A^{\pi_\text{old}}(q, o_i) + \beta \log \hat{\pi_\text{ref}} / (\lambda + \beta)$, which reintroduces the need for reference policy likelihood approximation. However, policy ratio has been removed, and the reference model can be reused when conducting multiple gradient updates with the same batch of rollouts (off-policy). The increased computational burden is slight. 

\subsection{Hyperparameters}
We provide the hyperparameters of SFT in Table \ref{tab:sft_hyperparam} and for \algname{} in Table \ref{tab:wd1-d1-hyper}.
\begin{table}[htbp]
\centering
\small
\begin{tabular}{c c c c c}
\toprule
\textbf{} & \textbf{bacth\_size} & \textbf{max\_length} & \textbf{learning\_rate} & \textbf{grad\_accum\_steps} \\
\midrule
\textbf{Value} & 1 & 4096 & 1e-5 & 4 \\
\bottomrule
\end{tabular}
\caption{Hyperparameters of SFT in \textit{d1} reproduction.}
\label{tab:sft_hyperparam}
\end{table}

\begin{table}[!h]
\centering
\small
\begin{tabularx}{\textwidth}{@{}lXX@{}}
\toprule
\textbf{Parameter} & \textbf{\textit{wd1}} & \textbf{\textit{d1}} \\
\midrule
\multicolumn{3}{@{}l}{\textbf{Model and Precision}} \\
use\_peft & true & true \\
torch\_dtype & bfloat16 & bfloat16 \\
load\_in\_4bit & true & true \\
attn\_implementation & flash\_attention\_2 & flash\_attention\_2 \\
lora\_r & 128 & 128 \\
lora\_alpha & 64 & 64 \\
lora\_dropout & 0.05 & 0.05 \\
peft\_task\_type & CAUSAL\_LM & CAUSAL\_LM \\
\midrule
\multicolumn{3}{@{}l}{\textbf{Training Configuration}} \\
seed & 42 & 42 \\
bf16 & true & true \\
sync\_ref\_model & True & True \\
ref\_model\_sync\_steps & 64 & 64 \\
adam\_beta1 & 0.9 & 0.9 \\
adam\_beta2 & 0.99 & 0.99 \\
weight\_decay & 0.1 & 0.1 \\
$\psi$ (\Cref{eq:weights}) & 1.0 & \\
max\_grad\_norm & 0.2 & 0.2 \\
warmup\_ratio & 0.0001 & 0.0001 \\
learning\_rate & 3e-6 & 3e-6 \\
lr\_scheduler\_type & constant\_with\_warmup & constant\_with\_warmup \\
\midrule
\multicolumn{3}{@{}l}{\textbf{Batching and Evaluation}} \\
per\_device\_train\_batch\_size & 6 & 6 \\
per\_device\_eval\_batch\_size & 1 & 1 \\
gradient\_accumulation\_steps & 2 & 2 \\
\midrule
\multicolumn{3}{@{}l}{\textbf{RL}} \\
num\_generations & 6 & 6 \\
max\_completion\_length & 256 & 256 \\
max\_prompt\_length & 200 & 200 \\
block\_length & 32 & 32 \\
diffusion\_steps & 128 & 128 \\
generation\_batch\_size & 6 & 6 \\
remasking & low\_confidence & low\_confidence \\
random\_masking & True & True \\
p\_mask\_prompt & 0.15 & 0.15 \\
beta & 0.00 & 0.04 \\
epsilon & -- & 0.5 \\
num\_iterations & 12 & 12 \\
\bottomrule
\end{tabularx}
\caption{Comparison of hyperparameters between \textit{wd1} and \textit{d1}.}
\vspace{-2em}
\label{tab:wd1-d1-hyper}
\end{table}

\subsection{Training Cost Estimation}
For the runtime measurements reported in \Cref{tab:train_cost}, we set $\mu = 8$ and train for a total of 6 global steps, corresponding to 48 gradient update steps. We use a batch size of 4 and the rest of the hyperparameters are the same as in \Cref{tab:wd1-d1-hyper}. 
To estimate the number of function evaluations (NFEs) involved in computing likelihood approximations, we count only the forward passes, as the number of backward passes remains consistent across methods. The additional NFEs observed in the \textit{d1} model arise from evaluating the likelihood under both the old and reference models, which are used for regularization. These extra evaluations are required only when new samples are drawn, as their outputs can be cached and reused across all gradient updates for $\mu$.
We additionally report the number of floating-point operations (FLOPs) per global training step, measured using the Flops Profiler from \citet{rasley2020deepspeed}.

\subsection{Computing Resources}
For both \algname~and \textit{d1}, RL training is conducted on four NVIDIA A100 GPUs (80GB), and SFT is performed on four A6000 GPUs (48GB). For \algname++ and MDPO, RL training is conducted on 8$\times$A800 (80GB).

\section{Additional Experiments}
We additionally report results for comparison to the results of the baseline \textit{d1} reported in the paper \citep{zhao2025d1}. As shown in ~\Cref{tab:llada_diffu_all}, our method \algname{} evaluated and selected from less checkpoints, can outperform \textit{d1} with a large margin in Sudoku and Countdown, achieving comparable performance in math problem-solving tasks.

\subsection{Summary of \algname~ Results}
\begin{table}[h!]
\centering
\caption{Test accuracy across different tasks. Our method demonstrates higher accuracy, especially significant in Sudoku and Countdown. The shaded area indicates where our method outperforms.}
\begin{tabular}{lcccccccc}
\toprule
\multicolumn{1}{c}{\multirow{2}{*}{\textbf{Model}}} & \multicolumn{2}{c}{\textbf{Sudoku}} & \multicolumn{2}{c}{\textbf{Countdown}} & \multicolumn{2}{c}{\textbf{GSM8K}} & \multicolumn{2}{c}{\textbf{MATH500}}   \\
 \cmidrule(lr){2-3} \cmidrule(lr){4-5} \cmidrule(lr){6-7} \cmidrule(lr){8-9}
 & 256 & 512 & 256 & 512 & 256 & 512 & 256 & 512 \\
\midrule
LLaDA-8B-Instruct              & 6.7 & 5.5 & 19.5 & 16.0 & 76.7 & 78.2 & 32.4 & 36.2   \\
+ diffu-GRPO \textcolor{gray}{\textit{(reported)}}         & 12.9 & 11.0 & 31.3 & 37.1 & 79.8 & 81.9 & 37.2 & 39.2   \\
+ diffu-GRPO \textcolor{gray}{\textit{(reproduced)}} & 16.1 & 11.7 & 27.0 & 34.0 & 80.7 & 79.1 & 34.4 &  39.0  \\
\textit{d1} \textcolor{gray}{\textit{(reported)}} & 16.7 & 9.5 & 32.0 & 42.2 & \textbf{81.1} & 82.1 & \textbf{38.6} & \textbf{40.2}   \\
\textit{d1} \textcolor{gray}{\textit{(reproduced)}} & 17.6 & 16.2 & 25.8 & 35.2 & 78.2 & 82.0 & 34.4 & 38.0   \\
\midrule
\textbf{\algname}         &  \cellcolor[gray]{.9} \textbf{76.4} & \cellcolor[gray]{.9} \textbf{62.8} & \cellcolor[gray]{.9} \textbf{51.2} & \cellcolor[gray]{.9} \textbf{46.1}  & 80.8 & \cellcolor[gray]{.9} \textbf{82.3} & 34.4 &  39.0 \\
\bottomrule
\end{tabular}
\vspace{1em}
\label{tab:llada_diffu_all}
\end{table}

\begin{table}[h!]
\centering
\caption{\textbf{Training cost.} The training steps to obtained the best post-trained model of three methods are 20, 150, and 7500. To compute the total rollouts, we need to compute the average rollouts in a single training step. Gradient steps per rollout batch represents the number of gradient descent conducted with a single batch of rollouts. In other words, 1 represents it is a pure on-policy RL training, and for any value $>12$, off-policy RL is executed. Total Batch Size is computed by multiplying per-device batch size, gradient accumulation and the number of gpus. Therefore, the average number of rollouts used for single step gradient descent should be computed by total batch size divided by gradient steps per rollout batch.}
\begin{tabular}{lccc}
\toprule
\textbf{Hyperparameter} & \textbf{\textit{wd1++}} & \textbf{MDPO} & \textbf{\textit{d1}} \\
\midrule
Training step of the best checkpoint             & 20   & 150   & 7500   \\
\midrule
Training Steps per Rollout batch              & 1   & 1   & 12   \\
\midrule
Per-Device Batch Size   & 4   & 1   & 6    \\
Gradient Accumulation   & 2   & 16  & 2    \\
GPUs used for training       & 8  & 8 & 4    \\
Total Batch Size        & 64  & 128 & 48    \\
\midrule
Avg. Rollouts per Step  & 64  & 128 & 4 \\
Total Rollouts & 1280  & 19200 & 30000 \\
\bottomrule
\end{tabular}
\label{table:compute_training_cost}
\end{table}

We additionally provide reward dynamics in comparison to \textit{wd1}-SFT in training. In Sudoku and Countdown, directly training with \algname{} without SFT shows significantly more efficient and stable learning process. In GSM8k and MATH500, the difference is negligible.
\begin{figure}[h]
    \centering
    \includegraphics[width=\linewidth]{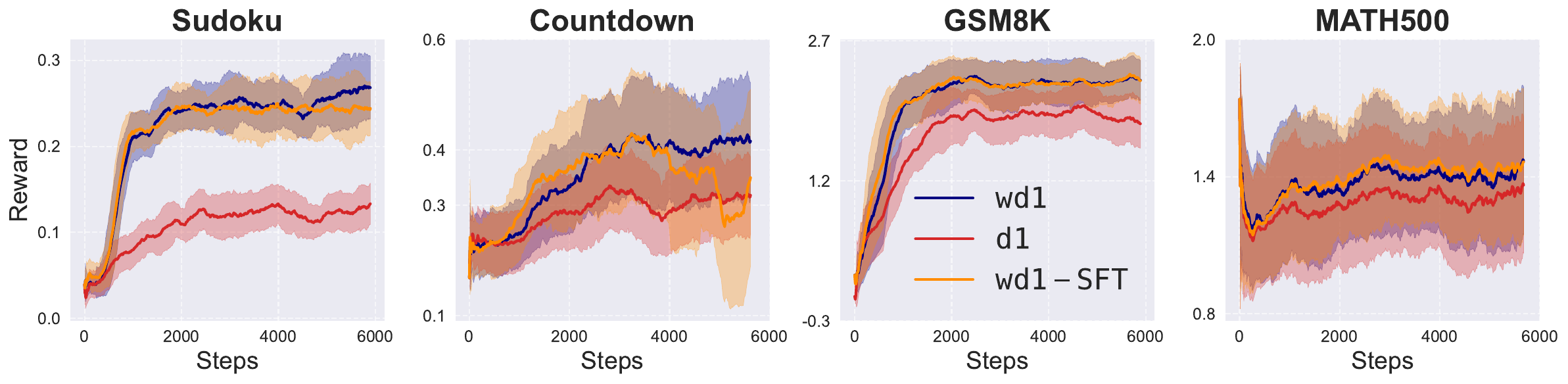}
    \caption{Reward Dynamics. \algname{} without SFT demonstrates better rewards in Sudoku and Countdown.}
    \label{fig:all_tasks_rewards}
\end{figure}

\subsection{Additional Ablation Study}
\label{append:ablation}
\begin{table}[h]
\caption{Ablation on weight $\lambda$ to combine positive $w^+$ and negative weights $w^-$ in \algname{} on Sudoku. Specifically, the final weight assigned to log-likelihood is computed as $-\lambda w^+ + (1-\lambda)w^-$.}
\label{tab:ablation_weight}
\centering
\begin{tabular}{lcc}
\toprule
\textbf{Combined Weight} & \textbf{Accuracy} & \textbf{Effective Tokens} \\
\midrule
0.5             &         25.63\% & 326.97   \\
0.4             &         11.77\% & 240.04  \\
0.6             &         14.11\% &220.13  \\
\bottomrule
\end{tabular}
\end{table}
We provide additional ablation study on the combined weight to confirm our analysis that the positive and negative samples terms in the loss function should be assigned equal proportion, due to the side case of a batch of all-negative generated responses (see the paragraph below \Cref{eq:weights}). Assigning equal proportions to positive and negative weights is not arbitrary but rather the most robust design. This can be understood through two critical failure modes that arise from imbalanced proportions:
\begin{itemize}
    \item When positive weight has larger proportion: In scenarios where all sampled completions have uniformly low rewards, a larger proportion of positive weights would paradoxically increase the log-likelihood of negative samples during wd1 optimization, which is clearly undesirable and contradicts the learning objective.
    \item When negative weight has larger proportion: Conversely, when all generated completions achieve uniformly high rewards, an insufficient proportion of positive weights would result in unlearning high-quality samples.
\end{itemize}
To empirically validate this analysis, we conducted experiments on the Sudoku dataset with varying mixing proportions. The results, presented in the table below, confirm our theoretical predictions.

\begin{figure}[h]
    \centering
    \includegraphics[width=0.48\linewidth]{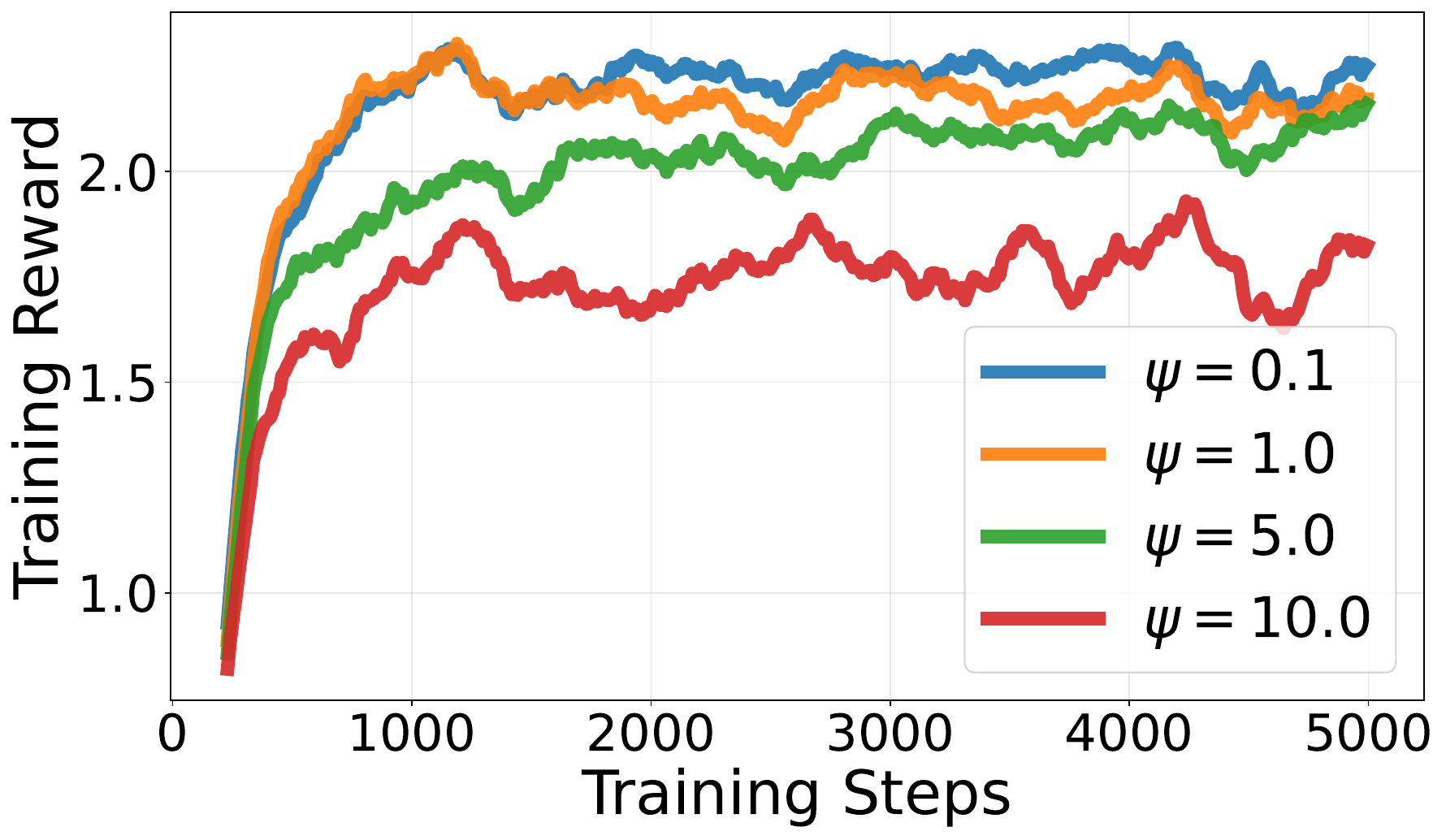}
    \includegraphics[width=0.48\linewidth]{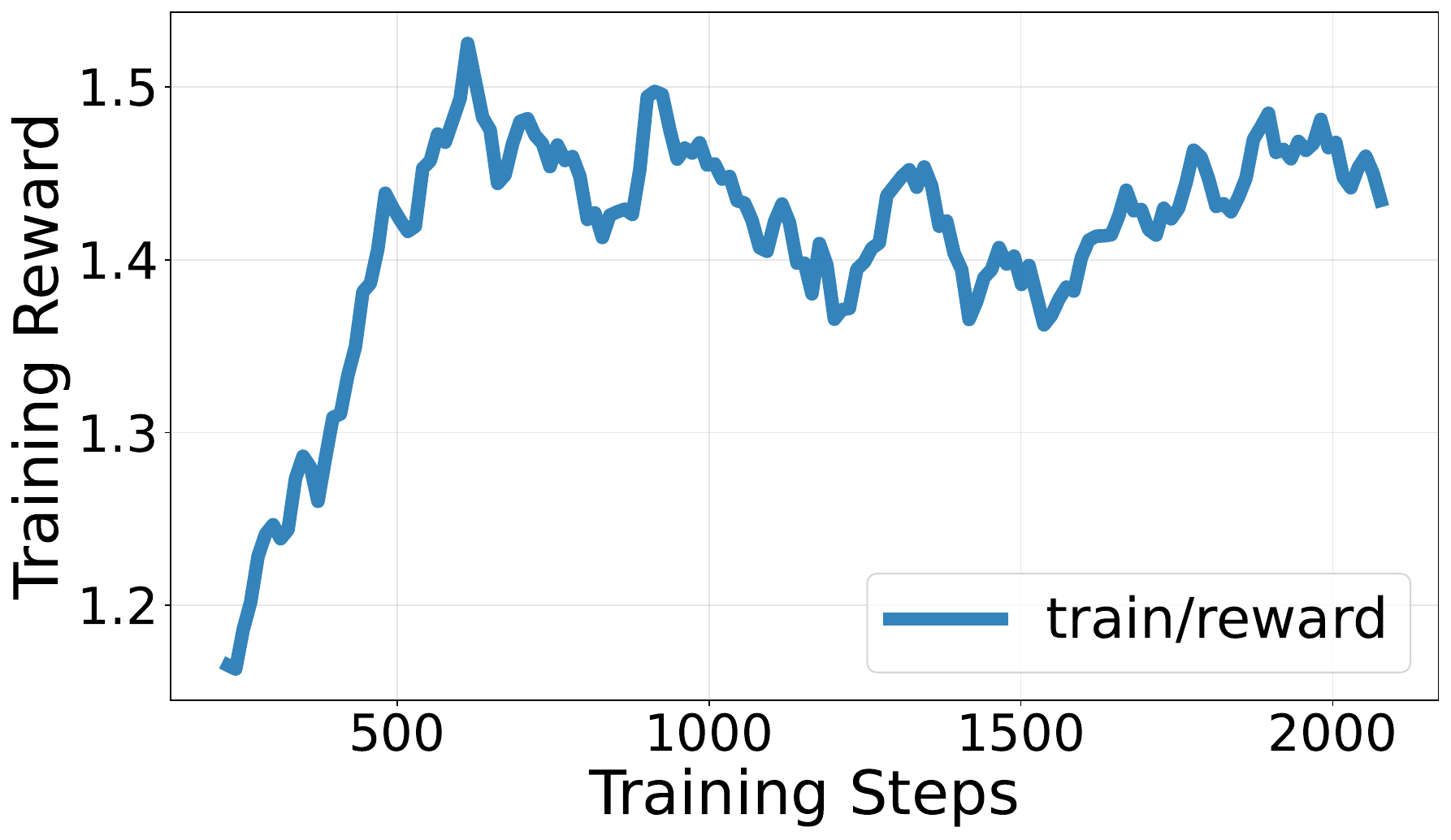}
    \caption{\textbf{Left:} Ablation study on $\psi$ in the weights (\Cref{eq:weights}) on GSM8K. \textbf{Right:} \algname{} training on MATH with a random seed different from the seed used in our main experiments. The abrupt decrease of the rewards in the early training (see \Cref{fig:all_tasks_rewards} MATH500) disappears.}
    \label{fig:weight_ablation}
\end{figure}
In all benchmark evaluations, we fix the hyperparameter $\psi=1$, which controls the scale of the exponential weighting in \algname{}. To validate this choice, we provide an ablation study on the coefficient $\psi=1$ in the exponential weight of \algname{} (Equation~\ref{eq:weights}) below.  Larger values $\psi$ leads to more extreme weight assigned to the samples. According to \Cref{fig:weight_ablation}, the training of applying different $\psi$ converges to similar rewards if $\psi$ is small. Overly large value (e.g. 10) can cause performance drop, implying that extreme weight assignment is detrimental.

\subsection{Coding Benchmarks}
We conduct 200 steps of \algname{} training on AceCode-87K \citep{zeng2025acecoder} following the implementation of Open-R1 \citep{openr1}. Our method achieves consistent improvements over the base model.
\begin{table}[h!]
\centering
\caption{Comparative performance of \algname{} improvements compared to base model LLaDA-8B-Instruct. We present the results of \algname{} fine-tuned with AceCode dataset \citep{zeng2025acecoder}.}
\begin{tabular}{c c c c c c}
\toprule
Task & Gen Length & Steps & Block Size & \algname{} & LLaDA \\
\midrule
HumanEval & 256 & 128 & 32 & \textbf{34.76 {\scriptsize\textcolor{blue!50}{(+3.66)}}}  & 31.10 \\
HumanEval & 256 & 256 & 32 & \textbf{39.02 {\scriptsize\textcolor{blue!50}{(+1.82)}}} 
 & 37.20 \\
HumanEval & 512 & 512 & 32 & \textbf{36.59 {\scriptsize\textcolor{blue!50}{(+0.61)}}} 
 & 35.98 \\
MBPP      & 128 & 128 & 32 & \textbf{39.2 {\scriptsize\textcolor{blue!50}{(+2.40)}}}  & 36.8 \\
MBPP      & 256 & 256 & 32 & \textbf{36.6 {\scriptsize\textcolor{blue!50}{(+1.00)}}}  & 35.6 \\
MBPP      & 512 & 512 & 32 & \textbf{36.8 {\scriptsize\textcolor{blue!50}{(+0.40)}}}  & 36.4 \\
\bottomrule
\end{tabular}
\end{table}

\subsection{Training Dynamics} 


\Cref{fig:all_tasks_rewards} presents the reward dynamics over gradient steps during training. \algname~exhibits a notably faster reward increase compared to \textit{d1}, highlighting its superior sample efficiency--effectively leveraging the reward signal to accelerate policy optimization. In addition, \Cref{fig:all_tasks_length} shows the average length of generated completions during training. On math reasoning benchmarks such as GSM8K and MATH500, \algname~converges to shorter output sequences than \textit{d1}, suggesting improved token efficiency while maintaining or improving performance.

\begin{figure}[t!]
    \centering
    \includegraphics[width=\linewidth]{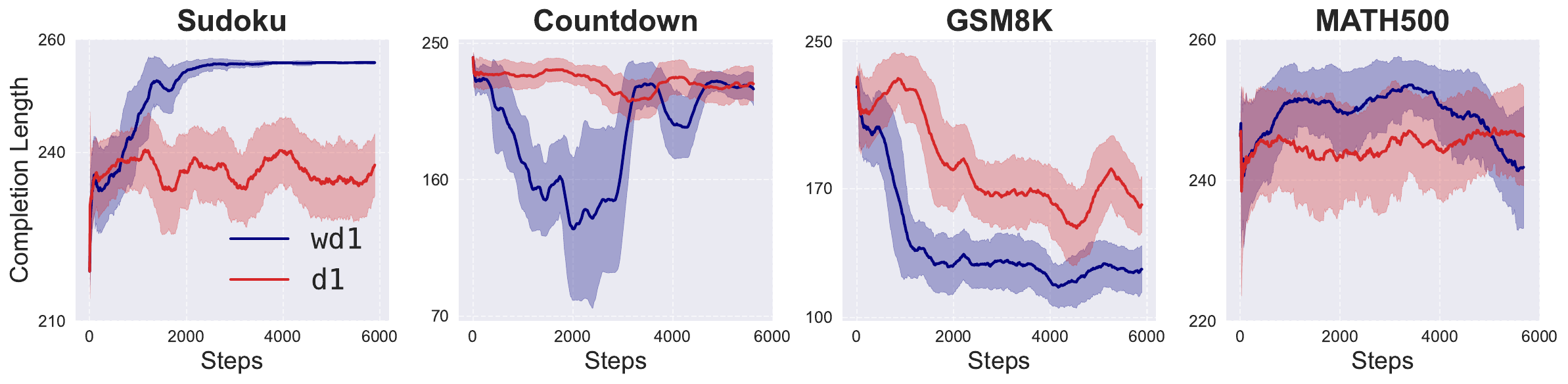}
    \caption{Completion lengths dynamics of \algname~and \textit{d1}. In math problem-solving tasks (GSM8K and MATH500), our method demonstrates smaller completion lengths and better token efficiency. }
    \label{fig:all_tasks_length}
\end{figure}

\section{Limitations}
Similar to other RL-based approaches, \algname{} may lose effectiveness when all generations within a sampled group receive identical rewards. This situation can occur under several conditions—for example, when the training dataset is either too simple or too challenging for the base model. Nonetheless, such cases can be mitigated through careful reward design and the incorporation of curriculum learning strategies.

An additional limitation of this work is that the current \algname{} framework is restricted to text-based reasoning. Extending it to multimodal reasoning or unified diffusion-based models (e.g., \citep{yang2025mmada}) represents a valuable direction for future research.

A final limitation concerns the likelihood approximation used in \algname{}. Our approach relies on the \textit{d1}-based approximation, which is computationally efficient but introduces bias. Although some prior works employ ELBO-based estimators (e.g., DCE), they require additional computational overhead \citep{zhao2025d1} often exhibit high variance, as demonstrated in Figure 1. This trade-off highlights an important area for further exploration.



\subsection{Additional Analysis on Unlearning}
\label{append:unlearning}

We provide extended demonstrations for Remark \Cref{remark:unlearning}, focusing specifically on the theoretical insights underlying the interpretation of the negative-sample reinforcement term in \algname{} as a form of data unlearning. Under the DCE likelihood approximation, the negative-sample reinforcement term in \algname{} becomes
\begin{align}
& \mathbb{E}_{o \sim p'_0(\cdot)}\bigg[ w^-(q, o) \cdot \log \pi_\theta(o|q) \bigg] = \mathbb{E}_{o \sim p'_0(\cdot)}\bigg[ \exp(-A(x_0)) \cdot \log \pi_\theta(o|q) \bigg] \\
= &\mathbb{E}_{x_0 \sim p'_0(\cdot)}\bigg[\exp \big(-A( x_0)\big) \cdot \underbrace{\mathbb{E}_{t \sim [0,T],p'_{t|0}(x_t|x_0)}\big[\sum_{x^i_t=[\texttt{mask}]} - \frac{1}{t}\log p_\theta (x_0^i|x^{\text{UM}}_t) \big]}_{\mathcal{L}_\text{DCE}}\bigg] \\
= &\mathbb{E}_{x^-_0 \sim p_{\text{data}}}\bigg[\underbrace{\mathbb{E}_{t \sim [0,T],p'_{t|0}(x_t|x^-_0)}\big[\sum_{x^i_t=[\texttt{mask}]} - \frac{1}{t}\log p_\theta (x_0^{-, i}|x^{\text{UM}}_t) \big]}_{\mathcal{L}_\text{DCE} \ \Leftrightarrow \ \text{ELBO}}\bigg], \label{eq:69}
\end{align}
where $p_{\text{data}}(x^-_0)=p'_0(x^-_0) \frac{\exp (-A( x^-_0))}{\sum_{x^-_0} p'_0(x^-_0) \exp (-A(x^-_0))}$. \Cref{eq:69} holds by simply applying importance sampling. 

Since DCE is equivalent to the evidence lower bound (ELBO) of masked discrete diffusion models, we draw an analogy between the final objective in \Cref{eq:69} and data unlearning in diffusion models \citep{alberti2025data}. \Cref{eq:69} can be viewed as a direct masked discrete–diffusion extension of NegGrad \citep{golatkar2020eternal}, which aims to minimize the evidence lower bound of the log-likelihood on samples with lower advantage.

\end{document}